\DeclareMathOperator*{\argmin}{argmin}
\newtheorem{proposition}{Proposition}
\title{Visualizing Deep Networks by Optimizing with Integrated Gradients}
\author{Zhongang Qi\textsuperscript{\rm 1,2}, Saeed Khorram\textsuperscript{\rm 1}, Li Fuxin\textsuperscript{\rm 1}\\
\textsuperscript{\rm 1}School of Electrical Engineering and Computer Science, Oregon State University\\
\textsuperscript{\rm 2}Applied Research Center, PCG, Tencent\\
{\tt\small zhongangqi@tencent.com, \{khorrams,lif\}@oregonstate.edu}}
\begin{document}

\maketitle

\begin{abstract}
Understanding and interpreting the decisions made by deep learning models is valuable in many domains. In computer vision, computing heatmaps from a deep network is a popular approach for visualizing and understanding deep networks. However, heatmaps that do not correlate with the network may mislead human, hence the performance of heatmaps in providing a faithful explanation to the underlying deep network is crucial. In this paper, we propose  I-GOS, which optimizes for a heatmap so that the classification scores on the masked image would maximally decrease. The main novelty of the approach is to compute descent directions based on the integrated gradients instead of the normal gradient, which avoids local optima and speeds up convergence. 
Compared with previous approaches, our method can flexibly compute heatmaps at any resolution for different user needs. 
Extensive experiments on several benchmark datasets show that the heatmaps produced by our approach are more correlated with the decision of the underlying deep network, in comparison with other state-of-the-art approaches.
\end{abstract}

\section{Introduction}
In recent years, there has been significant focus on explaining deep networks~ \cite{ribeiro2016should,ScottNIPS2017,ElenbergNIPS17,netdissect2017,Zhou_2018_ECCV,ZhangICNN17,DavidSelf18}. Explainability is important for humans to trust the deep learning model, especially in crucial decision-making scenarios.
In the computer vision domain, one of the most important explanation techniques is the heatmap approach \cite{MatDeconv,SimonyanVZ13,Gradcam17,zhang16excitationBP}, which focuses on generating heatmaps that highlight parts of the input image that are most important to the decision of the deep networks on a particular classification target.
The heatmaps can be used to diagnose deep models to understand whether the models utilize the right contents to make the classification. This diagnosis is important when deep networks malfunction in high-stake cases, e.g. autonomous driving. In medical imaging and other image domains that humans currently lack understanding, heatmaps can also be used to help humans gain further insights on which part of the images are important.

\begin{figure*}[t]
\begin{center}
\includegraphics[width=0.95\linewidth]{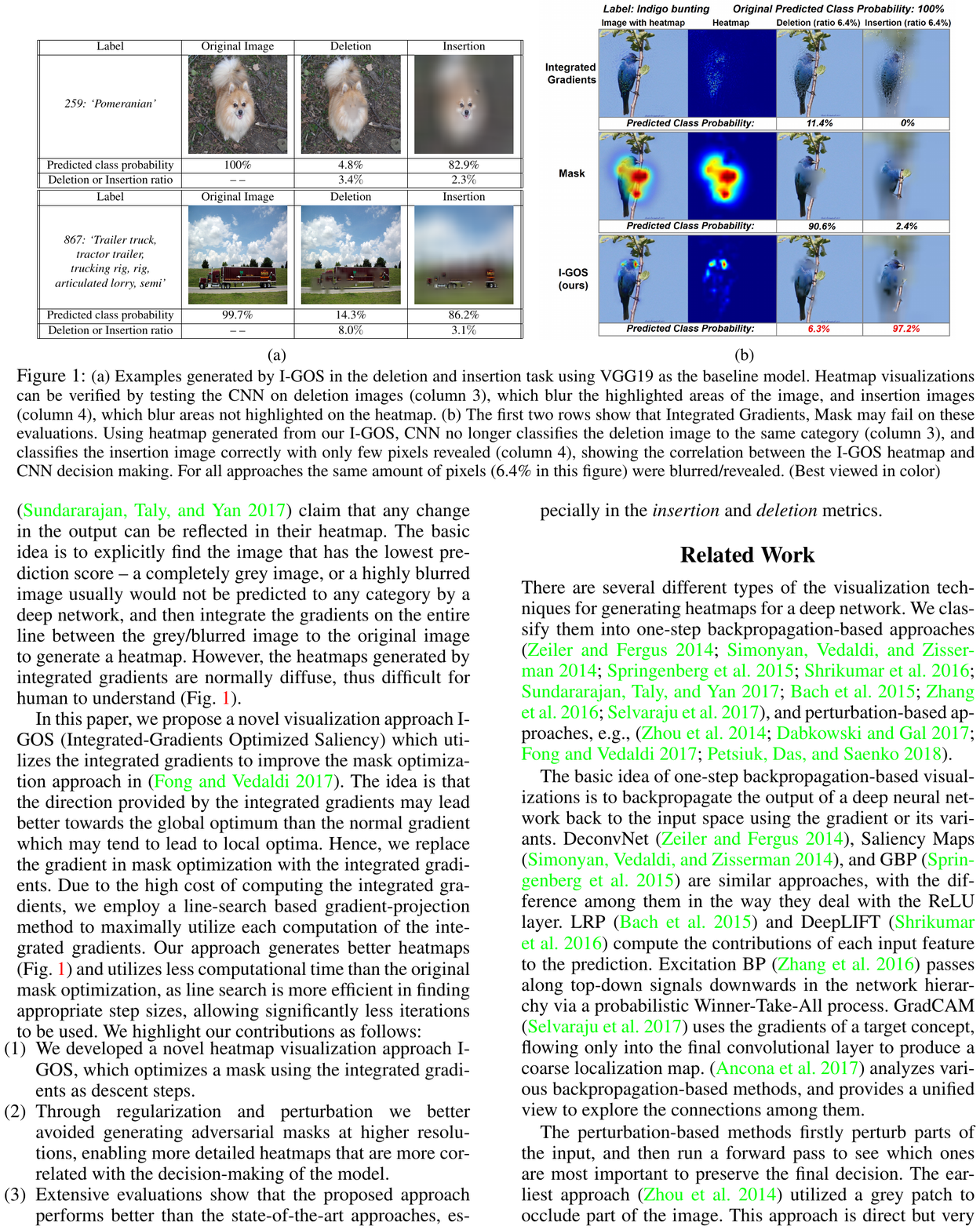}
\end{center}
\vskip -0.15in
\caption{(a) Examples generated by I-GOS in the deletion and insertion tasks using VGG19 as the baseline model. Heatmaps can be verified by testing the CNN on (multiple) deletion images (column 3), which blur the most highlighted areas on the heatmap, and (multiple) insertion images (column 4), which blur areas not highlighted on the heatmap. 
(b) The first two rows show that Integrated Gradients, Mask 
may fail on these evaluations. In the third row, I-GOS performs significiantly better since the CNN no longer classifies the  image to the same category with a small deleted area (column 3), and classifies the image correctly with only few pixels revealed (column 4), showing the correlation between the I-GOS heatmap and CNN decision making. For all approaches the same amount of pixels (6.4\% in this figure) were blurred/revealed. (Best viewed in color)} 
\label{fig:intr}
\end{figure*}

Some heatmap approaches achieve good visual qualities for human understanding, such as several one-step backpropagation-based visualizations including Guided Backpropagation (GBP) \cite{JTguided2015} and the deconvolutional network (DeconvNet) \cite{MatDeconv}. 
These approaches utilize the gradient or variants of the gradient and backpropagate them back to the input image, in order to decide which pixels are more relevant to the change of the deep network prediction.
However, whether they are actually correlated to the decision-making of the network is not that clear~\cite{2018Nie}.
\cite{2018Nie} proves that GBP and DeconvNet are essentially doing (partial) image recovery, and thus generate more human-interpretable visualizations that highlight object boundaries, which do not necessarily represent what the model has truly learned.

An issue with these one-step approaches is that they only reflect infinitesimal changes of the prediction of a deep network. In the highly nonlinear function estimated by the deep network, such infinitesimal changes are not necessarily reflective of changes large enough to alter the decision of the model.  \cite{2018RISE} proposed evaluation metrics based on masking the image with heatmaps and verifying whether the masking will indeed change deep network predictions. Ideally, if the highlighted regions for a category are removed from the image, the deep network should no longer predict that category. This is measured by the \textit{deletion} metric. On the other hand, the network should predict a category only using the regions highlighted by the heatmap, which is measured by the \textit{insertion} metric (Fig.~\ref{fig:intr}).

If these are the goals of a heatmap, a natural idea would be to directly optimize them. 
The mask approach proposed in \cite{ClassicMask} generates heatmaps by solving an optimization problem, which aims to find the smallest and smoothest area that maximally decrease the output of a neural network, directly optimizing the \textit{deletion} metric. 
It can generate very good heatmaps, but usually takes a long time to converge, and sometimes the optimization can be stuck in a bad local optimum due to the strong nonconvexity of the solution space  (Fig.~\ref{fig:intr}). 

In this paper, we propose a novel visualization approach I-GOS (Integrated-Gradients Optimized Saliency) which utilizes an idea called integrated gradients to improve the mask optimization approach in~\cite{ClassicMask}. The integrated gradients approach explicitly find a \textit{baseline} image with very low prediction confidence -- a completely grey or highly blurred image -- then compute a straight line between the original image and this baseline. The gradients on images on this line are then integrated
\cite{IntegratedGradient}. 
The idea is that the direction provided by the integrated gradients may lead better towards the global optimum than the normal gradient which may tend to lead to local optima. However, the original integrated gradient~\cite{IntegratedGradient} paper is a one-step visualization approach and generate diffuse heatmaps difficult for human to understand (Fig.~\ref{fig:intr}). In this paper, we replace the gradient in mask optimization with the integrated gradients. Due to the high cost of computing the integrated gradients, we employ a line-search based gradient-projection method to maximally utilize each computation of the integrated gradients. We also utilize some empirical perturbation strategies to avoid the creation of adversarial masks. In the end, 
our approach generates better heatmaps (Fig.~\ref{fig:intr}) and utilizes less computational time than the original mask optimization, as line search is more efficient in finding appropriate step sizes, allowing significantly less iterations to be used.
We highlight our contributions as follows:
\begin{itemize}
\item[(1)]{We developed a novel heatmap visualization approach I-GOS, which optimizes a mask using the integrated gradients as  descent steps.} 
\item[(2)]{Through regularization and perturbation we better avoided generating adversarial masks at higher resolutions, enabling more detailed heatmaps that are more correlated with the decision-making of the model.}
\item[(3)]{Extensive evaluations show that the proposed approach performs better than the state-of-the-art approaches, especially in the \textit{insertion} and \textit{deletion} metrics.}
\end{itemize}

\section{Related Work}
There are several different types of the visualization techniques for generating heatmaps for a deep network. We classify them into one-step backpropagation-based approaches \cite{MatDeconv,SimonyanVZ13,JTguided2015,InputGradient,IntegratedGradient,LRP15,zhang16excitationBP,Gradcam17}, and perturbation-based approaches, e.g., \cite{Occlude15,Dabkowski2017,ClassicMask,2018RISE}.

The basic idea of one-step backpropagation-based visualizations is to backpropagate the output of a deep neural network back to the input space using the gradient or its variants. 
DeconvNet \cite{MatDeconv}, Saliency Maps \cite{SimonyanVZ13}, and GBP \cite{JTguided2015} are similar approaches, with the difference among them in the way they deal with the ReLU layer. 
LRP \cite{LRP15} and DeepLIFT \cite{InputGradient} compute the contributions of each input feature to the prediction.
Excitation BP \cite{zhang16excitationBP} passes along top-down signals downwards in the network hierarchy via a probabilistic Winner-Take-All process. GradCAM \cite{Gradcam17} uses the gradients of a target concept, flowing only into the final convolutional layer to produce a coarse localization map. \cite{unifyBP} analyzes various backpropagation-based methods, and provides a unified view to explore the connections among them. 

Perturbation-based methods first perturb parts of the input, and then run a forward pass to see which ones are most important to preserve the final decision. The earliest approach \cite{Occlude15} utilized a grey patch to occlude part of the image. This approach is direct but very slow, usually taking hours for a single image \cite{unifyBP}. An improvement is to introduce a mask, and solve for the optimal mask as an optimization problem \cite{Dabkowski2017,ClassicMask}. \cite{Dabkowski2017} develop a trainable masking model that can produce the masks in a single forward pass. However, it is difficult to train a mask model, and different models need to be trained for different networks. \cite{ClassicMask} directly solves the optimization, and find the mask iteratively. Instead of only occluding one patch of the image, RISE \cite{2018RISE} generates thousands of randomized input masks simultaneously, and averages them by their output scores. However, it consumes significant time and GPU memory.


Another seemingly related but different domain is the saliency map from human fixation \cite{eyefix}. Fixation Prediction \cite{Kruthiventi_2016_CVPR,Kummerer_2017_ICCV} aims to identify the fixation points that human viewers would focus on at first glance of a given image, usually by training a network to predict those fixation points. This is different from deep explanation because deep models may use completely different mechanisms to classify from humans, hence human fixations should not be used to train or evaluate heatmap models.

\section{Model Formulation}

\subsection{Gradient and Mask Optimization}
Gradient and its variants are often utilized in visualization tools to demonstrate the importance of each dimension of the input. Its motivation comes from the linearization of the model. Suppose a black-box deep network $f$ predicts a score $f_c(I)$  on class $c$ (usually the logits of a class before the softmax layer) from an image $I$. Assume $f$ is smooth at the current image $I_0$, then a local approximation can be obtained using the first-order Taylor expansion:
{\small
\begin{align}
f_c(I) \approx f_c(I_0) + \langle \nabla f_c(I_0), I-I_0 \rangle,
\end{align}
}The gradient $\nabla f_c(I_0)$ is indicative of the local change that can be made to $f_c(I_0)$ if a small perturbation is added to it, and hence can be visualized as an indication of salient image regions to provide a local explanation for image $I_0$ \cite{SimonyanVZ13}.
In \cite{InputGradient}, the heatmap is computed by multiplying the gradient feature-wise with the input itself, i.e., $\nabla f_c(I_0) \odot I_0$, to improve the sharpness of heatmaps.


However, gradient only illustrates the infinitesimal change of the function $f_c(I)$ at $I_0$, 
which is not necessarily indicative of the salient regions that lead to a significant change on $f_c(I)$, especially when the function is highly nonlinear.
What we would expect is that the heatmaps indicate the areas that would really change the classification result significantly.
In \cite{ClassicMask}, a perturbation based approach is proposed which introduces a mask $M$ as the heatmap to perturb the input $I_0$.
$M$ is optimized by solving the following objective function:
{\small
\begin{align}
&\argmin_M~ F_c(I_0, M) = f_c\big(\Phi(I_0, M)\big) + g(M), \notag\\
&\text{where~~} 
 g(M) = \lambda_1 ||{\bf 1}-M||_1 +\lambda_2 \text{TV}(M), \label{eq:classicMask}\\
&~~~~~~~~\Phi(I_0, M) = I_0 \odot  M + \tilde{I}_0 \odot ({\bf 1}-M), ~~~~~{\bf 0} \leq M \leq {\bf 1}, \notag
\end{align}
}In (\ref{eq:classicMask}), $M$ is a matrix which has the same shape as the input image $I_0$ and whose elements are all in $[0,1]$; $\tilde{I}_0$ is a baseline image with the same shape as $I_0$, which should have a low score on the class $c$, 
$f_c\big(\tilde{I_0}\big) \approx \min_I f_c(I)$, 
and in practice either a constant image, random noise, or a highly blurred version of $I_0$. This optimization seeks to find a deletion mask that significantly decreases the output score
$f_c\big(\Phi(I_0, M)\big)$, i.e., $f_c\big(I_0 \odot M + \tilde{I}_0 \odot ({\bf 1}-M)\big) \ll f_c(I_0)$ under the regularization of $g(M)$. $g(M)$ contains two regularization terms, with the first term on the magnitude of $M$, and the second term a total-variation (TV) norm \cite{ClassicMask} to make $M$ more piecewise-smooth.



Although this approach of optimizing a mask performs significantly better than the gradient method, there exist inevitable drawbacks when using a traditional first-order algorithm to solve the optimization. 
First, it is slow, usually taking hundreds of iterations to obtain the heatmap for each image. 
Second, since the model $f_c$ is highly nonlinear in most cases, optimizing (\ref{eq:classicMask}) may only achieve a local optimum, with no guarantee that it indicates the right direction for a significant change related to the output class. 
Fig. \ref{fig:intr} and Fig. \ref{fig:compare1} show some heatmaps generated by the mask approach.

\subsection{Integrated Gradients}

Note that the problem of finding the mask is not a conventional non-convex optimization problem. For $F_c(I_0,M) = f_c(I_0,M) +g(M)$, we (approximately) know the global minimum (or, at least a reasonably small value) of $f_c(I_0, M)$ in a baseline image $\tilde{I}_0$, which corresponds to $M = \mathbf{0}$.
The integrated gradients \cite{IntegratedGradient} consider the straight-line path from the baseline $\tilde{I}_0$ to the input $I_0$. Instead of evaluating the gradient at the provided input $I_0$ only, the integrated gradients would be obtained by accumulating all the gradients along the path:
{\small
\begin{align}
IG_i(I_0) =    (I_0^i-\tilde{I}_0^i) \cdot \int_{\alpha=0}^{1} \frac{\partial f_c\big(\tilde{I}_0 + \alpha(I_0-\tilde{I}_0)\big)}{\partial I_0^i} d\alpha ,
\label{eq:IG}
\end{align}
}where $IG(I_0) = \nabla_{I_0}^{IG}f_c(I_0)$ is the integrated gradients of $f_c$ at $I_0$; $i$ represents the $i$-th pixel. 
%


In practice, the integral in (\ref{eq:IG}) is approximated via a summation.
We sum the gradients at points occurring at sufficiently small intervals along the straight-line path from the input $M$ to a baseline $\tilde{M}=\mathbf{0}$:
{\small
\begin{align}
\nabla^{IG} f_c(M) =\frac{1}{S} \sum_{s=1}^{S} \frac{ \partial f_c\left(\Phi\big(I_0, \frac{s}{S}M\big)\right)}{\partial M},
\label{eq:bIG}
\end{align}
}where $S$ is a constant, usually $20$. However, \cite{IntegratedGradient} only proposed to use integrated gradients as a one-step visualization method, 
and the heatmaps generated by the integrated gradients are still diffuse. 
%
Fig. \ref{fig:intr} and Fig. \ref{fig:compare1} show some heatmaps generated by the integrated gradients approach where a grey zero image is utilized as the baseline.
We can see that the integrated gradient contains many false positives in the area wherever the pixels have a large value of $I_0^i-\tilde{I}_0^i$ (either the white or the black pixels).



\subsection{Integrated Gradients Optimized Heatmaps}
We believe the above two approaches can be combined for a better heatmap approach. The integrated gradient naturally provides a better direction than the gradient in that it points more directly to the global optimum of a part of the objective function. 
One can view the convex constraint function $g(M)$ as equivalent to the Lagrangian of a constrained optimization approach with constraints $\|{\bf 1}-M\|_1 \leq B_1$ and $TV(M) \leq B_2$, $B_1$ and $B_2$ being positive constants, hence consider the optimization problem (\ref{eq:classicMask}) to be a constrained minimization problem on $f_c(\Phi(I_0, M))$. In this case, we know the unconstrained solution in $M = {\bf 0}$ is outside the constraint region. We speculate that an optimization algorithm may be better than gradient descent if it directly attempts to move to the unconstrained global optimum.


\begin{figure}[tb]
\centering
\includegraphics[width=0.78\columnwidth]{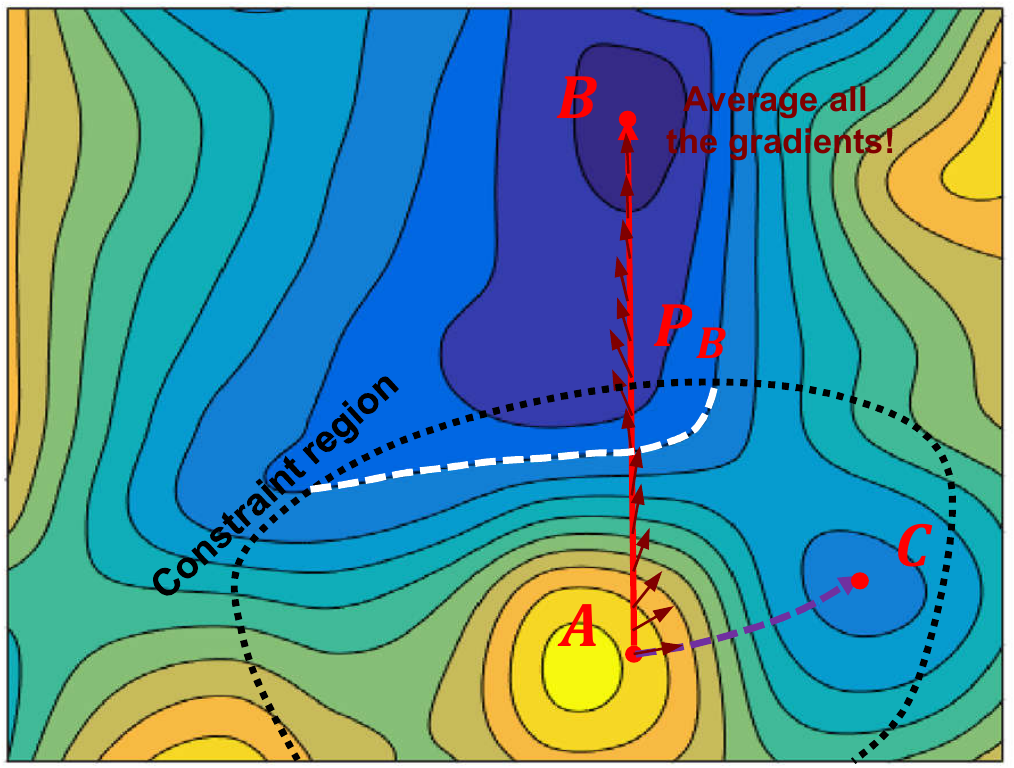}
\vskip -0.05in
\caption{(Best viewed in color) Suppose we are optimizing in a region with a starting point $A$, a local optimum $C$, and a baseline $B$ which is the unconstrained global optimum; the area within the black dashed line is the constraint region which is decided by the constraint terms $g(I,M)$ and the bound constraints $\mathbf{0} \leq M \leq \mathbf{1}$, we may find a better solution by always moving towards $B$ rather than following the gradient and end up at $C$.}
\label{fig:1}
\end{figure}

To illustrate this, Fig.
\ref{fig:1} shows a 2D optimization with a starting point $A$, a local optimum $C$, and a baseline $B$.
The area within the black dashed line is the constraint region which is decided by the constraint function $g(M)$ and the boundary of $M$.
A first-order algorithm will follow the gradient descent direction (the purple line) to the local optimum $C$; 
while the integrated gradients computed along the path $P_B$ from $A$ to the baseline $B$ may enable the optimization to reach an area better than $C$ within the constraint region. We can see that the integrated gradients with an appropriate baseline have a global view of the space and may generate a better descent direction.
In practice, the baseline does not need to be the global optimum. A good baseline near the global optimum could still improve over the local optimum achieved by gradient descent.

Hence, we utilize the integrated gradients to substitute the gradient of the partial objective $f_c(M)$ in optimization (\ref{eq:classicMask}), and introduce a new visualization method called Integrated-Gradient Optimized Saliency (I-GOS). For the regularization terms $g(M)$ in optimization (\ref{eq:classicMask}), we still compute the partial (sub)gradient with respect to $M$:

{\small
\begin{align}
\nabla g(M) = \lambda_1 \cdot \frac{\partial||{\bf 1} -M||_1}{\partial M} + \lambda_2 \cdot \frac{\partial \text{TV}(M)}{\partial M},
\end{align}
}

The total (sub)gradient of the optimization for $M$ at each step is the combination of the integrated gradients for the $f_c(M)$ and the gradients of the regularization terms $g(M)$:
{\small
\begin{align}
{TG}(M) = \nabla^{IG} f_c(M) + \nabla g(M),
\label{eq:total}
\end{align}
}Note that this is no longer a conventional optimization problem, since it contains $2$ different types of gradients.
The integrated gradients are utilized to indicate a direction for the partial objective $f_c(M)$; the gradients of the $g(M)$ are used to regularize this direction and prevent it to be diffuse. 

\begin{figure*}[t]
\centering
\includegraphics[width=1.8\columnwidth]{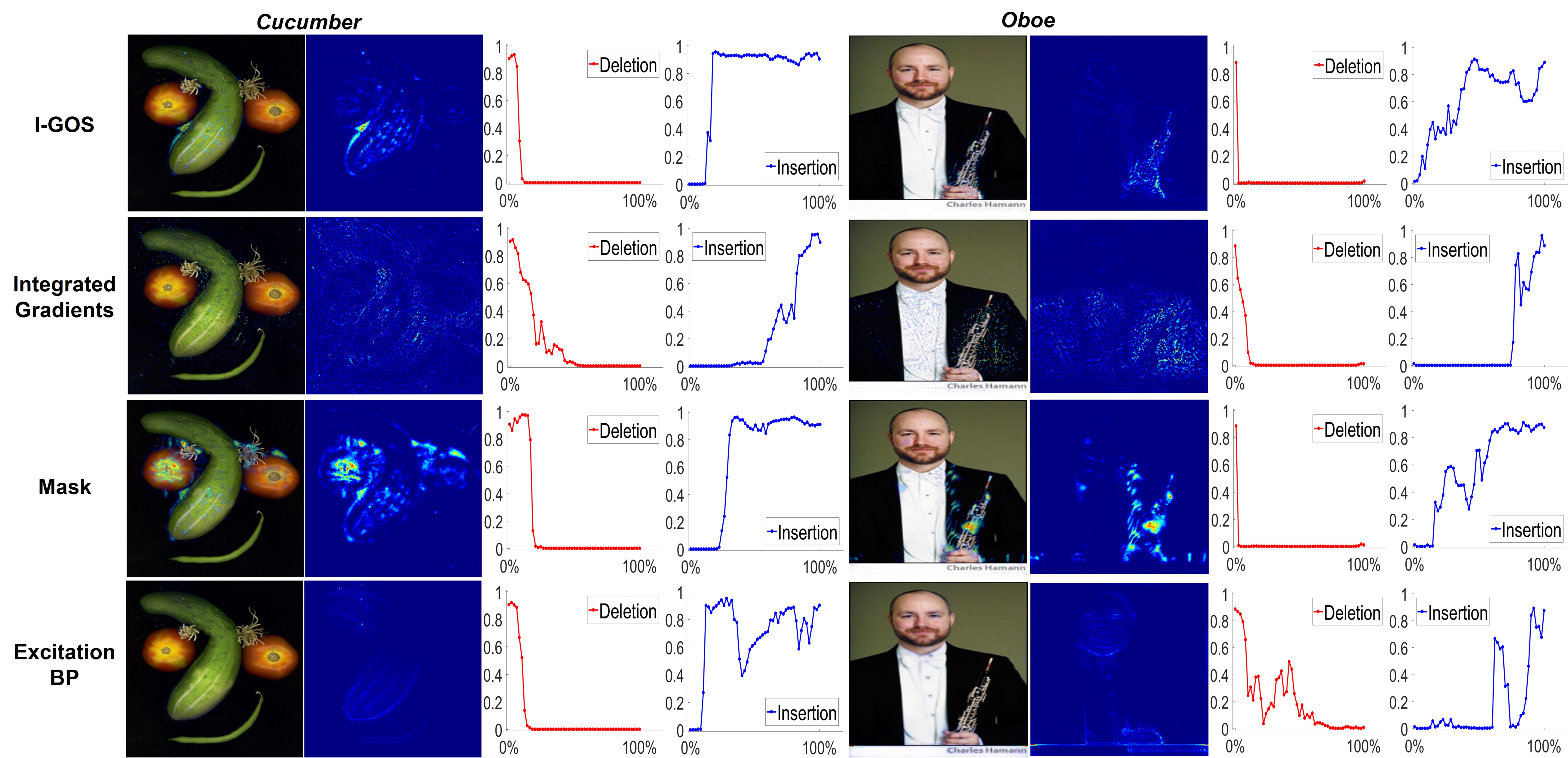}
\caption{Different heatmap approaches at $224\times 224$ resolution. The red plot illustrates how the CNN predicted probability drops with more areas masked, and the blue plot illustrates how the prediction increases with more areas revealed. 
The x axis for the red/blue plot represents the percentage of pixels masked/revealed;
and the y axis represents the predicted class probability.
One can see with I-GOS the red curve drops earlier and the blue plot increase earlier, leading to more area under the insertion curve (insertion metric) and less area under the deletion curve (deletion metric). (Best viewed in color) } 
\label{fig:compare1}
\end{figure*}

\subsection{Computing the step size}
\label{sec:step}
Since the time complexity of computing $\nabla^{IG} f_c(M)$ is high, we utilize a backtracking line search method and revise the Armijo condition \cite{numerical2000} to help us compute the appropriate step size for the total gradient $TG(M)$ in formula (\ref{eq:total}). 
The Armijo condition tries to find a step size such that:
{\small
\begin{align}
f(M_k+\alpha_k \cdot d_k) - f(M_k) \leq \alpha_k \cdot \beta \cdot \nabla f(M_k)^Td_k,
\label{eq:Armijo}
\end{align}
}where $d_k$ is the descent direction; $\alpha_k$ is the step size; $\beta$ is a parameter in $(0,1)$; $\nabla f(M_k)$ is the gradient of $f$ at point $M_k$.

The descent direction $d_k$ for our algorithm is set to be the inverse direction of the total gradient $TG(M_k)$.
However, since  $TG(M_k)$ contains the integrated gradients, it is uncertain whether $\nabla F_c(M_k)^Td_k = -\nabla F_c(M_k)^T TG(M_k)$ is negative or not. Hence, we replace $\nabla F_c(M_k)$ with $TG(M_k)$ and obtain a revised Armijo condition as follows:
{\small
\begin{align}
F_c\bigg(M_k-\alpha_k \cdot TG(M_k)\bigg) - F_c(M_k) \leq  \notag\\
-\alpha_k \cdot \beta \cdot TG(M_k)^{T}TG(M_k),
\label{eq:Armijo2}
\end{align}
}
The detailed backtracking line search works as follows:
{\begin{itemize}
\item[(1)]{Initialization: set the values of the parameter $\beta$, a decay $\eta$, a upper bound $\alpha_{u}$ and a lower bound $\alpha_{l}$ for the step size; let $j=0$, and $\alpha^0 = \alpha_{u}$;}
\item[(2)]{Iteration: if $\alpha^j$ satisfies condition (\ref{eq:Armijo2}), or $\alpha_j \leq \alpha_{l}$, end iteration; else, let $\alpha^{j+1}=\alpha^j \eta$, $j=j+1$, test condition (\ref{eq:Armijo2}) again with  $P_{[0,1]}(M_k-\alpha_k \cdot TG(M_k))$, where $P_{[0,1]}(M)$ clips the mask values to the closed interval $[0,1]$;}
\item[(3)]{Output: if $\alpha^j \leq \alpha_{l}$, the step size $\alpha_k$ for $TG(M_k)$ equals to the lower bound $\alpha_l$; else, $\alpha_k=\alpha^j$}
\end{itemize}}
A projection step is needed in the iteration because the mask $M_k$ is bounded by the closed interval $[0,1]$.
Since we have an integrated gradient in $TG(M)$, a large upper bound $\alpha_u$ and a small $\beta$ are needed in order to obtain a large step that satisfies condition (\ref{eq:Armijo2}), similar to satisfying the Goldstein conditions for convergence in conventional Armijo-Goldstein line search. 

Note that we cannot prove the convergence properties of the algorithm in non-convex optimization. However, the integrated gradient reduces to a scaling on the conventional gradient in a quadratic function (see supplementary material). 
In practice, it converges much faster than the original mask approach in ~\cite{ClassicMask} and we have never observed it diverging, although in some cases we do note that even with this approach the optimization stops at a local optimum. With the line search, we usually only run the iteration for $10-20$ steps. Intuitively, the irrelevant parts of the integrated gradients are controlled gradually by the regularization function $g(M)$ and only the parts that truly correlate with output scores would remain in the final heatmap.
 \newsavebox{\insertionvgg}
 \begin{lrbox}{\insertionvgg}
\begin{tabular}{||l|c|c|c|c|c|c|c|c||}
\hline
  \multirow{2}{*}{}   & \multicolumn{2}{c|}{224$\times$224}      & \multicolumn{2}{c|}{112$\times$112}      & \multicolumn{2}{c|}{28$\times$28}        & \multicolumn{2}{c||}{14$\times$ 14}        \\ \cline{2-9}
                    & Deletion        & Insertion       & Deletion        & Insertion       & Deletion        & Insertion       & Deletion        & Insertion       \\ \hline\hline
Excitation BP \cite{zhang16excitationBP}      & 0.2037          & 0.4728          & 0.2053          & 0.4966          & 0.2202          & 0.5256          & 0.2328          & 0.5452          \\ \hline
Mask \cite{ClassicMask}       & 0.0482          & 0.4158          & 0.0728          & 0.4377          & 0.1056          & 0.5335          & 0.1753          & 0.5647          \\ \hline
GradCam \cite{Gradcam17}             & -- --              &  -- --               &  -- --               &  -- --               &  -- --               &  -- --               & 0.1527          & 0.5938          \\ \hline
RISE  \cite{2018RISE}              & 0.1082          & 0.5139          &  -- --              &  -- --               &  -- --               &  -- --               &  -- --               &  -- --               \\ \hline
Integrated Gradients \cite{IntegratedGradient} & 0.0663          & 0.2551          &  -- --               &  -- --               &  -- --               &  -- --               &  -- --               &  -- --               \\ \hline\hline
I-GOS (ours)      & \textbf{0.0336} & \textbf{0.5246} & \textbf{0.0609} & \textbf{0.5153} & \textbf{0.0899} & \textbf{0.5701} & \textbf{0.1213} & \textbf{0.6387} \\ \hline
\end{tabular}
 \end{lrbox}

 \begin{table*}  
\caption{{ Evaluation in terms of deletion (lower is better) and insertion (higher is better)
scores on the ImageNet dataset using the VGG19 model. GradCam can only generate $14\times 14$ heatmaps for VGG19; RISE and Integrated Gradients can only generate $224\times 224$ heatmaps}}
 \centering   
       \scalebox{0.80}{\usebox{\insertionvgg}}
\label{tab:insertionvgg}
 \end{table*}

\begin{figure*}[t]
\centering
\includegraphics[width=1.8\columnwidth]{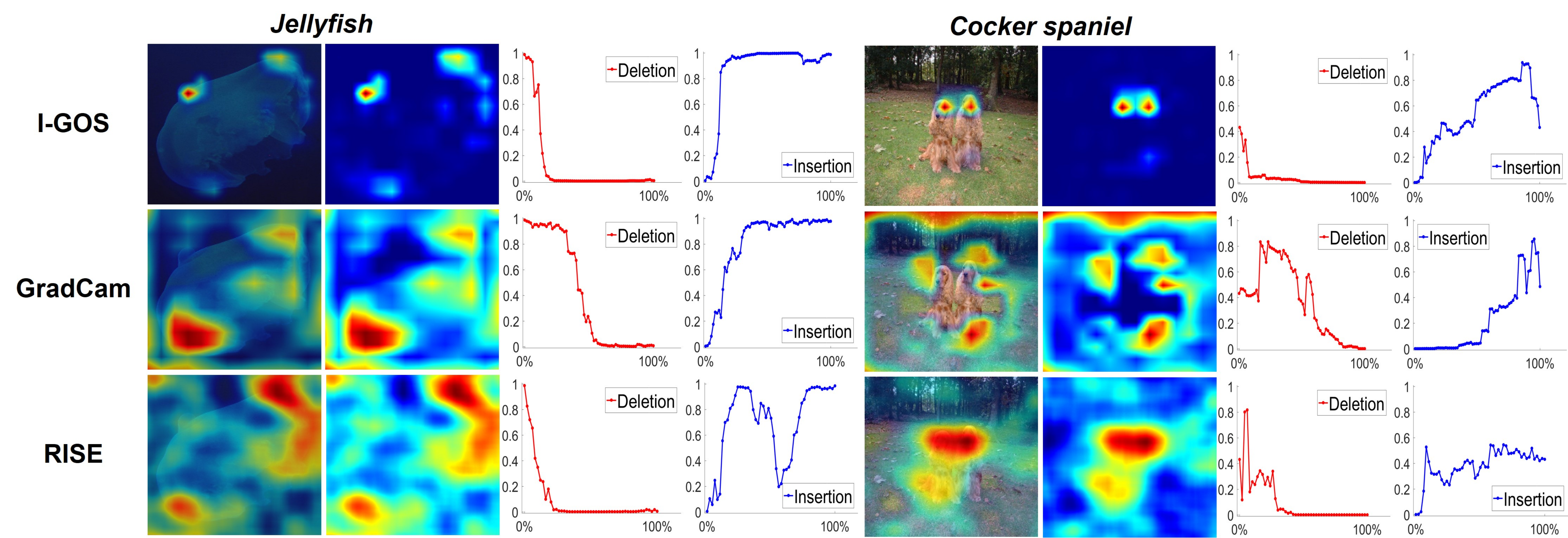}
\caption{Comparisons between GradCam, RISE, and I-GOS, see Fig. 3 caption for explanations of the meaning of the curves.}
\label{fig:GradCamRISE}
\end{figure*}

\subsection{Avoiding adversarial examples}
\label{sec:ad}
Since the mask optimization (\ref{eq:classicMask}) is similar to the adversarial optimization \cite{szegedy2013,goodfellow2014} except the TV term, it is concerning whether the solution would merely be an adversarial attack to the original image rather than explaining the relevant information.
An adversarial example is essentially a mask that drives the image off the natural image manifold, hence the approach in ~\cite{ClassicMask} utilize a blurred version of the original image as the baseline to avoid creating strong adversarial gradients off the image manifold. We follow \cite{ClassicMask} and also use a blurred image as the baseline. The total variation constraints also defeats adversarial masks by making the mask piecewise-smooth.  We also added other methods to avoid finding an adversarial perturbation:

\begin{algorithm}[]
\SetAlgoLined
 {\small {\bf Optimization objective}: formula (\ref{eq:upsample})\;
 {\bf Initialization}: set $M_0 =\mathbf{1}$\;
 \While{not converged and within the maximum steps}{
  {\bf Add} different random noise $n_s$ to $I_0$ when computing the integrated gradient: $\nabla^{IG} f_c(M_k) =\frac{1}{S} \sum_{s=1}^{S} \partial f_c\left(\Phi\big(I_0+n_s, \frac{s}{S}\text{up}(M_k)\big)\right)/\partial M_k$ \;
  {\bf Compute} the total (sub)gradient $TG(M_k)$ of the optimization for $M_k$ using formula (\ref{eq:total})\;
  {\bf Compute} the step size $\alpha_k$ using the introduced backtracking line search algorithm \;
  {\bf Update}: $M_{k+1} = M_k - \alpha_k \cdot TG(M_k)$\;
 }}
 \caption{\small I-GOS}
\label{alg:1}
\end{algorithm}

(1) When computing the integrated gradients using formula (\ref{eq:bIG}), we add different random noise $n_s$ to $I_0$ at each point along the straight-line path.

(2) When computing a mask $M$ whose resolution is smaller than that of the input image $I_0$, we upsample it before perturbing the input $I_0$, and rewrite formula (\ref{eq:classicMask}) as:
{\small
\begin{align}
M^* = \argmin~ &f_c\big(\Phi(I_0, \text{up}(M))\big) + \lambda_1 ||{\bf 1}-M||_1  \notag\\
&+\lambda_2 \text{TV}(M), 
\label{eq:upsample}
\end{align}
}where up($M$) upsamples $M$ to the original resolution with bilinear upsampling. The resolution of $M$ is lower, the generated heatmap is smoother.


Whether a mask is adversarial can be evaluated using the \textit{insertion metric}, detailed in the experiments section. 
We summarize an overview of the proposed I-GOS approach in Algorithm \ref{alg:1}.


 \newsavebox{\insertionres}
 \begin{lrbox}{\insertionres}
 \begin{tabular}{||l|@{  }c@{  }|@{  }c@{  }|@{  }c@{  }|@{  }c@{  }|@{  }c@{  }|@{  }c@{  }|@{  }c@{  }|@{  }c@{  }|@{  }c@{  }|@{  }c@{  }||}
\hline
    \multirow{2}{*}{}                 & \multicolumn{2}{c|}{224$\times$224}     & \multicolumn{2}{c|}{112$\times$112}      & \multicolumn{2}{c|}{28$\times$28}        & \multicolumn{2}{c|}{14$\times$14}        & \multicolumn{2}{c||}{7$\times$7}          \\ \cline{2-11}
                    & Deletion       & Insertion       & Deletion        & Insertion       & Deletion        & Insertion       & Deletion        & Insertion       & Deletion        & Insertion       \\ \hline\hline
Mask \cite{ClassicMask}       & 0.0468         & 0.4962          & 0.0746          & 0.5090           & 0.1151          & 0.5559          & 0.1557          & 0.5959          & 0.2259          & 0.6003          \\ \hline
GradCam  \cite{Gradcam17}            &  -- --            & -- --              &  -- --              &  -- --              &  -- --              &  -- --              &  -- --              & -- --              & 0.1675          & 0.6521          \\ \hline
RISE \cite{2018RISE}          & 0.1196         & 0.5637          &   -- --       & -- --         & -- --        &  -- --         & -- --        & -- --        & -- --          &  -- --        \\ \hline
Integrated Gradients \cite{IntegratedGradient} & 0.0907         & 0.2921          &  -- --             &  -- --             &  -- --              &  -- --              &  -- --              &  -- --              &  -- -- &  -- --             \\ \hline
I-GOS (ours)        & \textbf{0.0420} & \textbf{0.5846} & \textbf{0.0704} & \textbf{0.5943} & \textbf{0.1059} & \textbf{0.5986} & \textbf{0.1387} & \textbf{0.6387} & \textbf{0.1607} & \textbf{0.6632} \\ \hline
\end{tabular}
 \end{lrbox}

 \begin{table*}   
 \caption{{\small Comparative evaluation in terms of deletion (lower is better) and insertion (higher is better)
scores on ImageNet using ResNet50 as the base model. GradCam can only generate $7\times 7$ heatmaps for ResNet50; RISE and Integrated Gradients only generate $224\times 224$ heatmaps}}
 \centering   
       \scalebox{0.80}{\usebox{\insertionres}}
 \label{tab:insertionres}
 \end{table*}

\begin{figure*}[tbp]
\centering
\includegraphics[width=1.9\columnwidth]{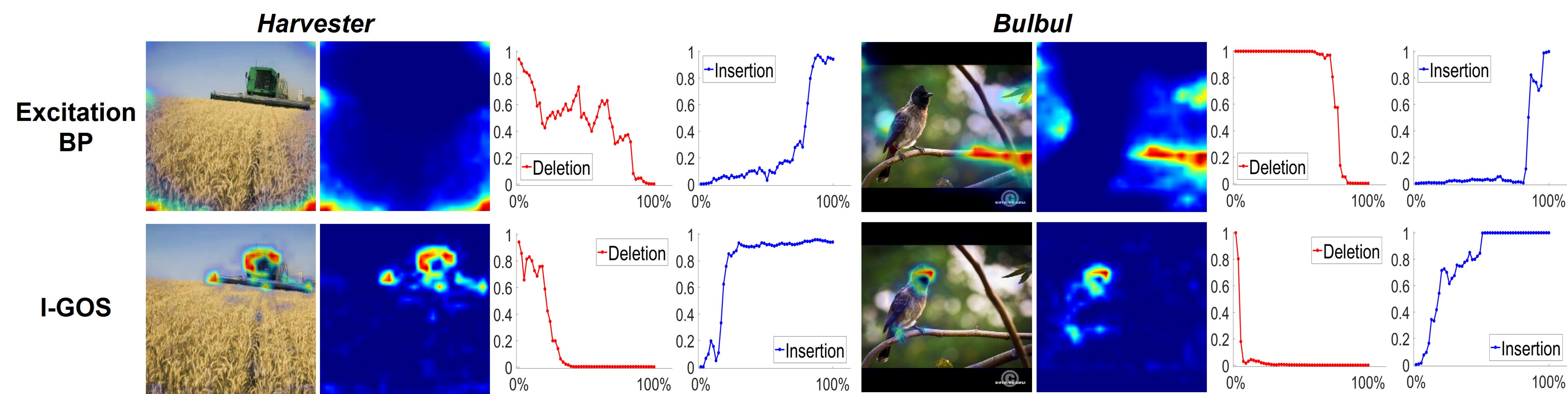}
\vskip -0.1in
\caption{Comparison between Excitation BP and I-GOS at resolution  $28\times 28$. See Fig. 3 for explanations of the figures} 
\label{fig:exbp}
\end{figure*}

\section{Experiments}
\subsection{Evaluation Metrics and Parameter Settings}
Although many recent work focus on explainable machine learning, there is still no consensus about how to measure the explainability of a machine learning model. For the heatmaps, one of the important issues is whether we are explaining the image with human's understanding or the deep model's perspective. A common pitfall is to try to use human's understanding to explain the deep model, e.g. the pointing game \cite{zhang16excitationBP}, which measures the ability of a heatmap to focus on the ground truth object bounding box. However, there are plenty of evidences that deep learning sometimes uses background context for object classification which would invalidate pointing game evaluations. Many heatmap papers show appealing images which look plausible to humans, but \cite{2018Nie} points out they could well be just doing partial image recovery and boundary detection, hence generate human-interpretable results that do not correlate with network prediction. Hence, it is important to utilize objective metrics that have causal effects on the network prediction for the evaluation.


We follow \cite{2018RISE} to adopt {\em deletion} and {\em insertion} as better metrics to evaluate different heatmap approaches. In the {\em deletion} metric, we remove $N$ pixels (dependent on the resolution of the mask)most highlighted by the heatmap  each time from the original image iteratively until no pixel is left, and replace the removed ones with the corresponding pixels from the baseline image.
The deletion score is the area under the curve (AUC) of the classification scores after softmax~\cite{2018RISE} (red curve in Fig. \ref{fig:compare1}-\ref{fig:exbp}).
For the {\em insertion} metric, we replace $N$ most highlighted pixels from the baseline image with the ones from the original image iteratively until no pixel left (blue curve in Fig.\ref{fig:compare1}-\ref{nonoise}). The insertion score is also the AUC of the classification scores for all the replaced images.
In the experiments, we generate heatmaps with different resolutions, including $224\times 224$, $112\times 112$, $28\times 28$, $14\times 14$, and $7\times 7$. 
And we compute the deletion and insertion scores by replacing pixels based on generated heatmaps at the original resolutions before upsampling.


The intuition behind the {\em deletion} metric is that the removal of the pixels most relevant to a class will cause the prediction confidence to drop  sharply. This is similar to the optimization goal in eq. (\ref{eq:classicMask}). Hence, only utilizing the {\em deletion} metric is not satisfactory enough since adversarial attacks can also achieve a quite good performance.
The intuition behind the {\em insertion} metric is that only keeping the most relevant pixels will retain the original score as much as possible. Since adversarial masks usually only optimize the deletion metric, it often use irrelevant parts of the image to drop the prediction score. Thus, if only those parts are revealed to the model, usually the model would not make a confident prediction on the original class, hence a low insertion score. 
Therefore, a good {\em insertion} metric indicates a non-adversarial mask. However, only using the insertion metric would not identify blurry masks (e.g. Fig.~\ref{fig:GradCamRISE}), hence the {\em deletion}-{\em insertion} metrics should be considered jointly.


For the deletion and insertion task, we utilize the pretrained VGG19 \cite{VGG19} and Resnet50 \cite{ResNet} networks from the PyTorch model zoo to test $5,000$ randomly selected images from the validation set of ImageNet \cite{ImageNet}.
In Eq. (\ref{eq:Armijo2}), $\beta= 0.0001$. $\lambda_1$ and $\lambda_2$ in Eq. (\ref{eq:upsample}) were fixed across all experiments under the same heatmap resolution.

We downloaded and ran the code for most baselines, except for \cite{IntegratedGradient} which we implemented. All baselines were tuned to best performances. For RISE, we followed \cite{2018RISE} to generate $4,000$ $7\times 7$ random samples for VGG, and $8,000$ $7\times 7$ random samples for ResNet. For all experiments we used the same pre-/post-processing with the same baseline image $\tilde I_0$. 
\cite{2018RISE} used a less blurred image for insertion and a grey image for deletion.
Since we found the blurriness in \cite{2018RISE} was not always enough to get the CNN to output $0$ confidence, we used a more blurred image for both insertion and deletion, hence the insertion and deletion  scores for RISE are bit different in our paper compared with theirs.

\subsection{Results and Discussions}

{\bf Deletion and Insertion:}
Table \ref{tab:insertionvgg} and \ref{tab:insertionres} show the comparative evaluations of I-GOS with other state-of-the-art approaches in terms of the {\em deletion} and {\em insertion} metrics on the ImageNet dataset using VGG19 and ResNet50 as the baseline model, respectively.
From Table \ref{tab:insertionvgg} and \ref{tab:insertionres} we observe that our proposed approach I-GOS performs better than all baselines in both deletion and insertion scores for heatmaps with all different resolutions. 


Integrated Gradients obtains the worst insertion score among all the approaches, which indicates that it indeed contains lots of pixels uncorrelated with the classification, as in the {\em Cucumber} and {\em Oboe} examples in Fig. \ref{fig:compare1}.
Excitation BP sometimes fires on irrelevant parts of the image as argued in \cite{2018Nie}.
Thus, it performs the worst in the deletion task. GradCAM and RISE also suffer on the deletion score maybe because of the randomness on the masks they generate, which sometimes fixate on random background regions irrelevant to classification.
Fig. \ref{fig:compare1}-\ref{fig:exbp} shows some visual comparisons between our approach and baselines at various resolutions. 
The reason of insertion curve going down and up is that sometimes part of the image that contains features that are indicative of other classes could be inserted, which could increase the activation for other classes, potentially driving down the softmax probability for the current class.

Note that one advantage of our approach compared to the previous best RISE and GradCAM is the flexibility in terms of resolutions. RISE and Integrated Gradients can only generate $224 \times 224$ heatmaps.
GradCam can only generate $14 \times 14$ heatmap on VGG19, and $7\times 7$ heatmap on Resnet50, respectively. Our approach is better than them at their resolutions, but also offers the flexibility to use other resolutions. High resolutions are necessary especially when the image has thin parts (e.g. Fig.~\ref{nonoise}), however may be less visually appealing since the masked pixels may be sparse. Our approach is significantly better than all baselines that can operate on all resolutions. Note that, the insertion metric is higher at lower resolutions, because a larger chunk of image with more complete context information is inserted at every point. Hence, a few percentage points lower insertion metric at higher resolutions do not necessarily mean the heatmaps are any worse. In practice, $28\times 28$ heatmaps are usually more visually appealing, but in order to capture thin parts, we sometimes need to resort to $224\times 224$ (Fig.~\ref{fig:nonoise}).
 \newsavebox{\runtime}
 \begin{lrbox}{\runtime}
\begin{tabular}{||@{  }l@{  }|@{  }c@{  }|@{  }c@{  }|@{  }c@{  }|@{  }c@{  }|@{  }c@{  }||}
\hline
  Running time (s)           & 224$\times$224         & 112$\times$112        & 28$\times$28          & 14$\times$14     &7$\times$7    \\ \hline\hline
Mask   & 17.03         & 14.61        & 14.66        & 14.35   &  14.24   \\ \hline
GradCam         & -- --         & -- --            & -- --            & -- --   &    ${\bm <}$\textbf{1}      \\ \hline
RISE         & 61.77         & -- --            & -- --            & -- --   &    -- --    \\ \hline
Integrated Gradients  & ${\bm <}$\textbf{1}         & -- --            & -- --            & -- --   &    -- --   \\ \hline
I-GOS (ours)     & \text{6.07} & \textbf{5.73} & \textbf{5.70} & \textbf{5.63}& \text{5.62}\\ \hline
\end{tabular}
 \end{lrbox}

 \begin{table}[t]   
 \caption{ Comparative evaluation in terms of runtime (averaged on $5,000$ images) on the ImageNet dataset using ResNet50 as the base model.} 
 \centering   
       \scalebox{0.80}{\usebox{\runtime}}
 \label{tab:time}
 \end{table}


{\bf Speed:} In Table \ref{tab:time}, we summarize the average runtime for Mask, RISE, GradCam, Integrated Gradients, and I-GOS on the ImageNet dataset using ResNet50 as the base model.
For each approach, we only use one Nvidia 1080Ti GPU. For I-GOS, the maximal iteration is $15$; for Mask, the maximal iteration is $500$.
Our approach is faster than Mask and RISE. Especially, it converges quickly, with the average number of iterations to converge being $13$ and the time for each iteration being $0.38s$.
The average running times for the backpropagation-based methods are all less than $1$ second. However, our approach achieve much better performance than these approaches, especially with higher resolutions.
To the best of our knowledge, our approach I-GOS is the fastest among the perturbation-based methods, as well as the one with the best performance in deletion and insertion metrics among all heatmap approaches.

\begin{figure}[t]
\centering
\includegraphics[width=1\columnwidth]{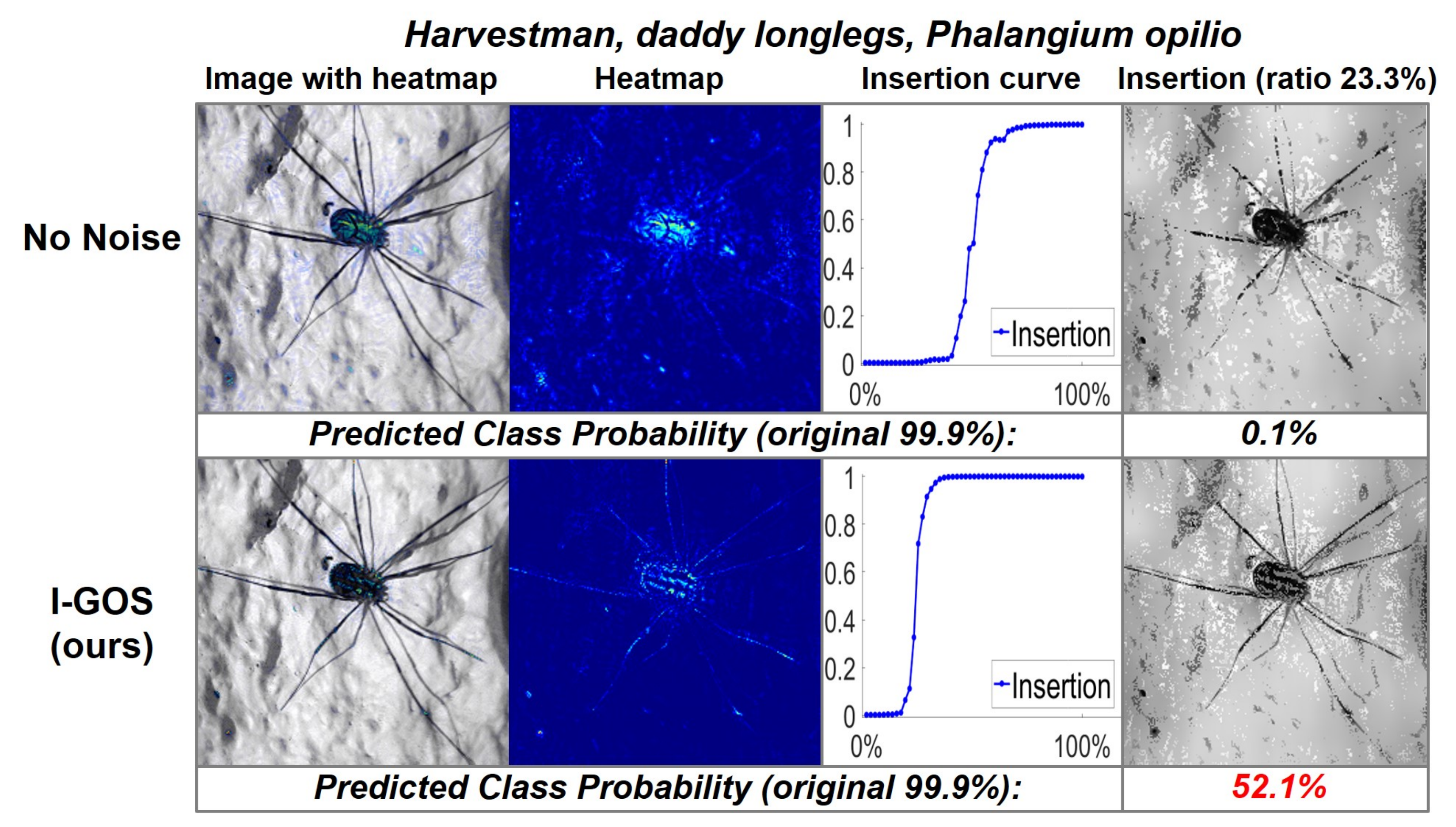}
\vskip -0.1in
\caption{Examples from ablation studies (at $224 \times 224$ resolution). With added noise, the heatmap successfully reveals the entire legs of \textit{daddy longlegs}, leading to better insertion metric, whereas without noise it is more adversarial (maybe merely by breaking each leg, CNN confidence is already reduced), leading to worse insertion metric} 
\label{nonoise}
\end{figure}

 \newsavebox{\ablation}
 \begin{lrbox}{\ablation}
\begin{tabular}{||l|l|l|l|l||}
\hline
                        & \multicolumn{2}{l|}{224$\times$224}      & \multicolumn{2}{l|}{28$\times$28}        \\ \hline\hline
     I-GOS                   & Deletion        & Insertion       & Deletion        & Insertion       \\ \hline\hline
Ours            & 0.0336          & \textbf{0.5246} & 0.0899          & \textbf{0.5701} \\ \hline
No TV term      & \textbf{0.0308} & 0.3712          & \textbf{0.0841} & 0.5181          \\ \hline
No noise        & 0.0559          & 0.4194          & 0.0872          & 0.5634          \\ \hline
Fixed step size & 0.0393          & 0.5024          & 0.0906          & 0.5403          \\ \hline
\end{tabular}
 \end{lrbox}

 \begin{table}[]   
 \caption{The results of the ablation study on VGG19.} 
 \centering   
       \scalebox{0.80}{\usebox{\ablation}}
 \label{tab:abl}
 \end{table}


{\bf Ablation Studies:} We show the results of ablation studies in Table \ref{tab:abl}. 
From Table \ref{tab:abl} we observe that without the TV term, insertion scores would indeed suffer significantly while deletion scores do not change much, indicating that the TV term is important to avoid adversarial masks.
The random noise introduced in section {\em Avoiding adversarial examples} of the paper is very useful when the resolution of the mask is high (e.g, 224$\times$224). From Fig. \ref{nonoise} we observe that I-GOS with noise can achieve much better insertion scores than without noise for the same insertion ratio. When the resolution is low (e.g, 28$\times$28), the noise is not that important since low resolution can already avoid adversarial examples. When we utilize a fixed step size (the step size is $1$ in Table \ref{tab:abl}), both deletion and insertion scores become worse, showing the utility of the line search.

{\bf Failure Case:} Fig. \ref{badexample} shows one failure case, where I-GOS found an adversarial mask and the insertion score did not increase till the end. Our observation is that optimization-based methods such as I-GOS usually do not work well when the deep model's prediction confidence is very low (less than $0.01$), or when the deep model makes a wrong prediction.

\begin{figure}[t]
\centering
\includegraphics[width=.95\columnwidth]{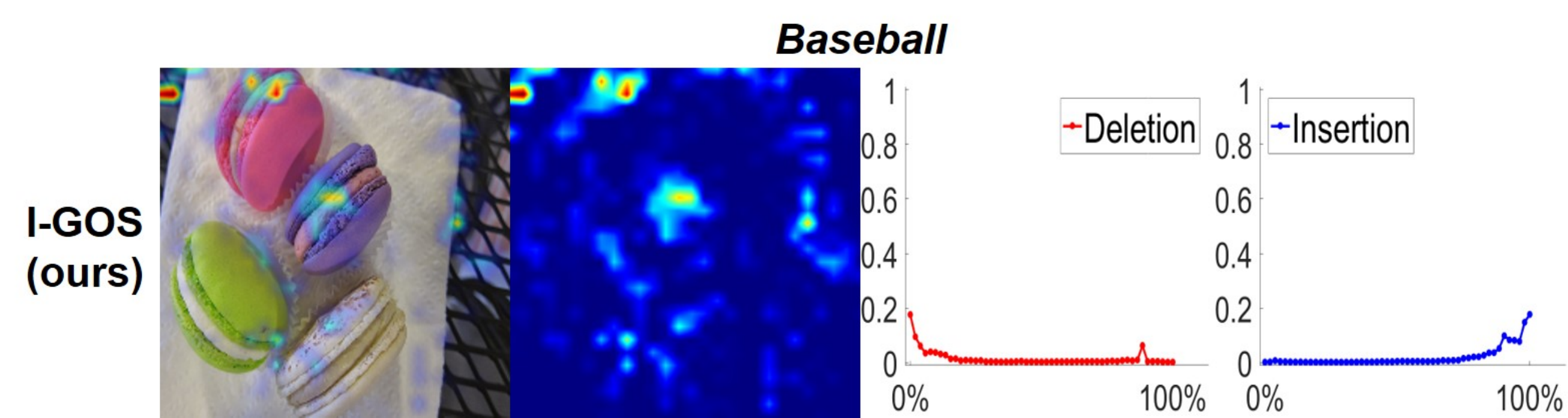}
\caption{One failure case for I-GOS, insertion curve does not move until almost all pixels have been inserted.} 
\label{badexample}
\end{figure}

 \newsavebox{\loss}
 \begin{lrbox}{\loss}
\begin{tabular}{||l||l|l||l|l||l|l||}
\hline
                                                              & \multicolumn{2}{c||}{$\lambda_1 = 0.01, \lambda_2 = 0.2$} & \multicolumn{2}{c||}{$\lambda_1 = 0.1, \lambda_2 = 2$} & \multicolumn{2}{c||}{$\lambda_1 = 1, \lambda_2 = 20$} \\ \hline
                                                              & I-GOS                       & Mask                       & I-GOS                     & Mask                      & I-GOS                     & Mask                     \\ \hline\hline
\begin{tabular}[c]{@{}l@{}}Total\\ loss\end{tabular}          & 0.2241                      & 0.3349                     & 0.3739                    & 0.4857                    & 0.6098                    & 0.6794                   \\ \hline
Deletion                                                      & 0.0825                      & 0.1056                     & 0.0861                    & 0.1178                    & 0.0899                    & 0.1340                    \\ \hline
Insertion                                                     & 0.5418                      & 0.5335                     & 0.5624                    & 0.5307                    & 0.5701                    & 0.5207                   \\ \hline
\end{tabular}
 \end{lrbox}

 \begin{table}[t]
 \caption{The optimization loss on VGG19 for resolution $28\times 28$.} 
 \centering   
       \scalebox{0.78}{\usebox{\loss}}
 \label{tab:loss}
 \end{table}

{\bf Convergence:} For the values of the objective after convergence with Mask \cite{ClassicMask} vs. the proposed I-GOS, 
Table \ref{tab:loss} shows the comparison at $28\times 28$ with different parameters. Best parameters were used for each approach in Table \ref{tab:insertionvgg} (0.01/0.2 for Mask and 1/20 for I-GOS). It can be seen at every parameter setting I-GOS has lower total loss than Mask (total loss is higher with larger $\lambda_1$ and $\lambda_2$ since the L1+TV terms have higher weights in total loss).

\section{Conclusion}
In this paper, we propose a novel visualization approach I-GOS, which utilizes integrated gradients to optimize for a heatmap. We show that the integrated gradients provides a better direction than the gradient when a good baseline is known for part of the objective of the optimization. The heatmaps generated by the proposed approach are human-understandable and more correlated to the decision-making of the model. 
Extensive experiments are conducted on three benchmark datasets with four pretrained deep neural networks, which shows that I-GOS
 advances state-of-the-art deletion and insertion scores on all heatmap resolutions.


\section{Acknowledgments}
This work was partially supported by DARPA contract N66001-17-2-4030.


\section*{Supplementary Material}


\subsection*{{\uppercase\expandafter{\romannumeral1}. Properties of the Integrated Gradient in Quadratic Functions}}

\begin{proposition} The integrated gradients reduce to a scaling on the conventional gradient in a quadratic function if the baseline is the optimum.
\end{proposition}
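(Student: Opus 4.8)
The plan is to put the quadratic objective in vertex form about its optimum and then evaluate the path integral in closed form; the key realization is that for a quadratic the gradient along the straight-line path is \emph{linear} in the interpolation parameter, so the integral collapses to a constant multiple of the endpoint gradient. Concretely, let $q$ denote the objective to which the integrated gradients are applied, regarded as a function of the optimization variable $x$ (either the image $I$ in (\ref{eq:IG}) or the mask $M$ in (\ref{eq:bIG})), and suppose $q$ is quadratic, $q(x) = \tfrac12 (x-x^*)^{\top} A (x-x^*) + q(x^*)$, with $A$ the constant Hessian. Taking the baseline to be the optimum $x^*$ means $\nabla q(x^*) = A(x^*-x^*) = \mathbf{0}$, and more generally $\nabla q(x) = A(x-x^*)$. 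These are the only two facts I need: a vanishing gradient at the baseline and a constant Hessian (so that $\nabla q$ is affine).

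Next I would substitute the interpolated point $x^* + \alpha(x_0 - x^*)$ into the gradient, obtaining $\nabla q\big(x^* + \alpha(x_0 - x^*)\big) = A\big(\alpha(x_0 - x^*)\big) = \alpha\,\nabla q(x_0)$. Plugging this into the path-averaged-gradient form of the integrated gradients actually used by I-GOS (equations~(\ref{eq:bIG})--(\ref{eq:total})) gives, in the continuous limit,
\[
\nabla^{IG} q(x_0) = \int_{0}^{1}\nabla q\big(x^* + \alpha(x_0 - x^*)\big)\,d\alpha = \left(\int_{0}^{1}\alpha\,d\alpha\right)\nabla q(x_0) = \tfrac12\,\nabla q(x_0),
\]
which is exactly the asserted scaling of the conventional gradient, here by the constant $\tfrac12$. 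For the finite-sum approximation in~(\ref{eq:bIG}) the same computation yields $\tfrac1S\sum_{s=1}^{S}\tfrac sS\,\nabla q(x_0) = \tfrac{S+1}{2S}\,\nabla q(x_0)$, still a pure scaling, tending to $\tfrac12$ as $S\to\infty$ (e.g.\ $0.525$ for $S=20$).

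Finally I would add a short remark reconciling this with the original component-wise form~(\ref{eq:IG}): there the extra factor $(x_0 - x^*)$ makes the integrated gradient equal to $\tfrac12\,(x_0 - x^*)\odot\nabla q(x_0)$, a coordinate-wise reweighting rather than a global scalar multiple, so the clean ``scaling'' statement refers to the path-averaged form that the optimization uses. There is no real obstacle here---the computation is routine---and the only thing to be careful about is to state precisely what ``quadratic'' and ``optimum'' are used for, namely that together they force $\nabla q$ to be linear along the path and to vanish at the baseline. Without the baseline being a critical point one would instead get $\nabla^{IG} q(x_0) = \tfrac12\big(\nabla q(x_0) + \nabla q(x_{\text{baseline}})\big)$, the trapezoidal average of the two endpoint gradients, which is in general not a scaling of $\nabla q(x_0)$; this contrast is worth noting as it pinpoints exactly why the ``baseline is the optimum'' hypothesis is essential.
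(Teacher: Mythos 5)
Your proposal is correct and follows essentially the same route as the paper's own proof: exploit the fact that the gradient of a quadratic is affine, hence linear in the interpolation parameter along the straight-line path once the baseline gradient vanishes, and average to obtain the scaling factor $\tfrac{S+1}{2S}$ (or $\tfrac12$ in the continuum limit), exactly matching the paper's computation. The only cosmetic differences are your use of vertex form instead of the general form $x^{\top}Ax+b^{\top}x+c$ and your added remarks on the non-optimal-baseline case and the component-wise form, which the paper omits.
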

\begin{proof}

Given a quadratic function $f({\mathbf x})= {\mathbf x}^TA{\mathbf x} +b^T{\mathbf x} +c$, we have its conventional gradient as:
$\nabla f({\mathbf x}) = (A+A^T){\mathbf x} +b$.
Considering a straight-line path from the current point ${\mathbf x}_k$ to the baseline ${\mathbf x}_0$, for point ${\mathbf x}_s$ along the path, we have:
${\mathbf x}_s = {\mathbf x}_0+\frac{s}{S}({\mathbf x}_k-{\mathbf x}_0)$,
{
\begin{align}
\nabla f({\mathbf x}_s) &= (A+A^T){\mathbf x}_s +b \notag\\
&= (A+A^T)\left( {\mathbf x}_0+\frac{s}{S}({\mathbf x}_k-{\mathbf x}_0)\right)+b \notag\\
&= \frac{s}{S}(A+A^T){\mathbf x}_k +\frac{S-s}{S}(A+A^T){\mathbf x}_0 +b \notag\\
&= \frac{s}{S}\nabla f({\mathbf x}_k) + \frac{S-s}{S} \nabla f({\mathbf x}_0),
\label{eq:Pxs}
\end{align}
}Thus, we obtain the integrated gradient along the straight-line path as:
{
\begin{align}
\nabla^{IG} f({\mathbf x}_k) &=\frac{1}{S} \sum_{s=1}^{S} \nabla f({\mathbf x}_s) \notag\\
&=\frac{S+1}{2S}\nabla f({\mathbf x}_k) + \frac{S-1}{2S} \nabla f({\mathbf x}_0),
\label{eq:IG0}
\end{align}
}When the baseline ${\mathbf x}_0$ is the optimum of the quadratic function, $\nabla f({\mathbf x}_0) = 0$, and then
{
\begin{align}
\nabla^{IG} f({\mathbf x}_k) = \frac{S+1}{2S}\nabla f({\mathbf x}_k).
\label{eq:IG1}
\end{align}
}Hence, the integrated gradients reduce to a scaling on the conventional gradient.

In this case, the revised Armijo condition also reduces to the conventional Armijo condition up to a constant.

\end{proof}

\subsection*{{\uppercase\expandafter{\romannumeral2}. Pointing Game}}

For the pointing game task, following \cite{2018RISE}, if the most salient pixel lies inside the human annotated bounding box of an object, it is counted as a hit.
The pointing game accuracy equals to $\frac{\# Hits}{\# Hits+\# Misses}$, averaged over all categories.
We utilize two pretrained VGG16 models from \cite{2018RISE} to test $2,000$ randomly selected images from the validation set of MSCOCO, and $2,000$ randomly selected images from the test set of VOC07, respectively. 
Table \ref{tab:pg} shows the comparative evaluations of I-GOS with other state-of-the-art approaches in terms of mean accuracy in the pointing game on MSCOCO and PASCAL, respectively.
Here we utilize the same pretrained models from \cite{2018RISE}.
Hence, we list the pointing game accuracies reported in the paper except for Mask and our approach I-GOS.
From Table \ref{tab:pg} we observe that, I-GOS beats all the other compared approaches except for RISE, and it improves significantly over of the Mask.
During the experiments we notice that, some object labels for MSCOCO and PASCAL in the pointing game have very small output scores for the pretrained VGG16 models, which affects the optimization greatly for both Mask and I-GOS. 
RISE does not seem to suffer from this. We believe RISE may be good at the pointing game, but its randomness would generally lead to a mask that is too diffuse, which significantly hurts its deletion and insertion scores (Table \ref{tab:insertionvgg} and Table \ref{tab:insertionres} in the paper), 
while our approach generates a much more concise heatmap.

 \newsavebox{\point}
 \begin{lrbox}{\point}
\begin{tabular}{||l|l|l||}
\hline
Mean Acc (\%)          &    MSCOCO & VOC07\\ \hline\hline
AM \cite{SimonyanVZ13}           & 37.10   &      76.00      \\ \hline
Deconv  \cite{MatDeconv}       & 38.60   &     75.50       \\ \hline
MWP   \cite{zhang16excitationBP}         & 39.50 &   76.90           \\ \hline
Excitation BP \cite{zhang16excitationBP} & 49.60 &   80.00           \\ \hline
RISE   \cite{2018RISE}        & \textbf{50.71}  &   {\bf 87.33}         \\ \hline\hline
Mask \cite{ClassicMask} (14$\times$14)  & 40.03  &    79.45        \\ \hline
Mask \cite{ClassicMask} (28$\times$28)  & 43.24  &    77.57        \\ \hline\hline
I-GOS (ours) (14$\times$14) & 47.16   &     {\bf 85.81}      \\ \hline
I-GOS (ours) (28$\times$28) & \bf{49.62}  &    83.61        \\ \hline
\end{tabular}
 \end{lrbox}

 \begin{table}[b]   
 \caption{\small Mean accuracy (\%) in the pointing game for VGG16 on MSCOCO and PASCAL VOC07, respectively.} 
 \centering   
       \scalebox{0.80}{\usebox{\point}}
 \label{tab:pg}
 \vskip -0.15in
 \end{table}

 \subsection*{{\uppercase\expandafter{\romannumeral4}. Adversarial Examples}}
Figure \ref{fig:ad}-\ref{fig:ad2} shows some examples when using I-GOS to visualize adversarial examples. 
 Here we utilize the MI-FGSM method \cite{Dong_2018_CVPR} on VGG19 to generate adversarial examples. 
 From Fig.~\ref{fig:ad}-\ref{fig:ad2} we observe that the heatmaps for the original images and for the adversarial examples generated by I-GOS are totally different.
 For the original image, I-GOS can often lead to a high classification confidence on the original class by inserting a small portion of the pixels.
 For the adversarial image though, almost the entire image needs to be inserted for CNN to predict the adversarial category. We note that we are not presenting I-GOS as a defense against adversarial attacks, and that specific attacks may be designed targeting the salient regions in the image. However, these figures show that the I-GOS heatmap and the insertion metric are robust against those full-image based attacks and not performing mere image reconstruction.

 \begin{figure*}[]   
 \centering   
\includegraphics[width=2\columnwidth]{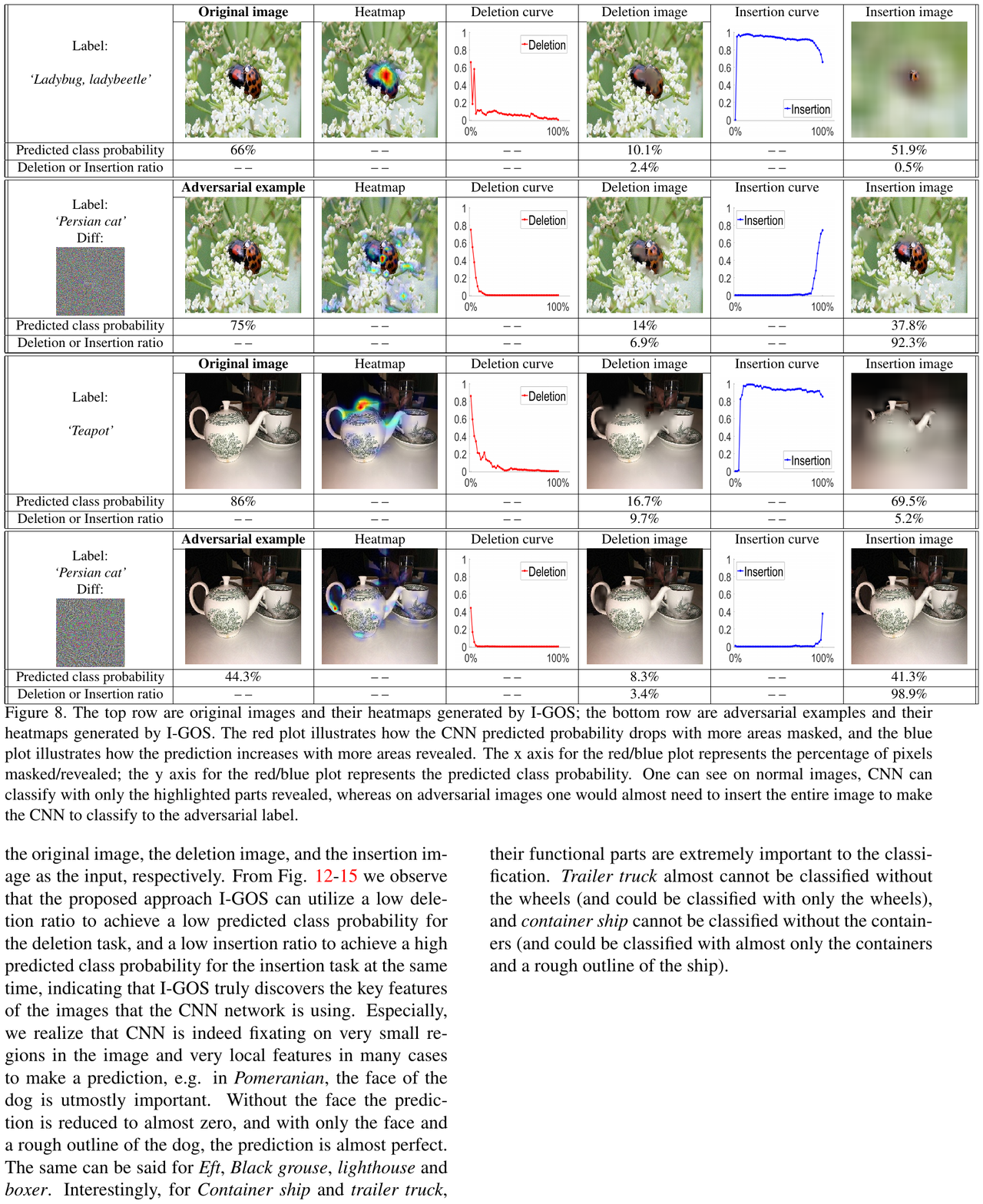}
\caption{\small The top row are original images and their heatmaps generated by I-GOS; the bottom row are adversarial examples and their heatmaps generated by I-GOS. 
The red plot illustrates how the CNN predicted probability drops with more areas masked, and the blue plot illustrates how the prediction increases with more areas revealed. The x axis for the red/blue plot represents the percentage of pixels masked/revealed; the y axis for the red/blue plot represents the predicted class probability.
One can see on normal images, CNN can classify with only the highlighted parts revealed, whereas on adversarial images one would almost need to insert the entire image to make the CNN to classify to the adversarial label.} 
 \label{fig:ad}
 \vskip -0.15in
 \end{figure*}

 \begin{figure*}[]   
 \centering   
\includegraphics[width=2\columnwidth]{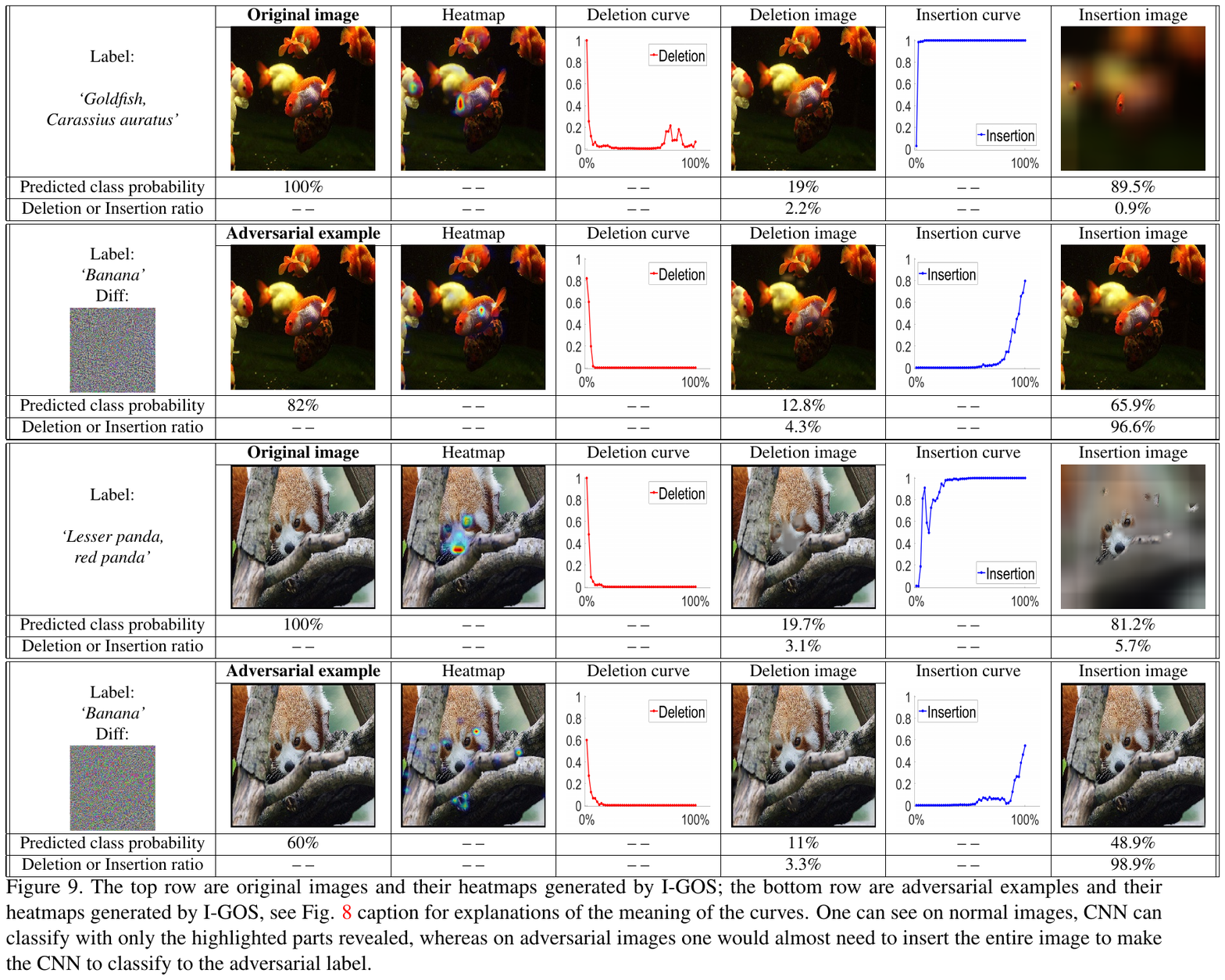}
\caption{\small The top row are original images and their heatmaps generated by I-GOS; the bottom row are adversarial examples and their heatmaps generated by I-GOS, see Fig. \ref{fig:ad} caption for explanations of the meaning of the curves. One can see on normal images, CNN can classify with only the highlighted parts revealed, whereas on adversarial images one would almost need to insert the entire image to make the CNN to classify to the adversarial label.} 
 \label{fig:ad2}
 \vskip -0.15in
 \end{figure*}

\begin{figure*}[t]
\vskip -0.05in
\centering
\includegraphics[width=1.9\columnwidth]{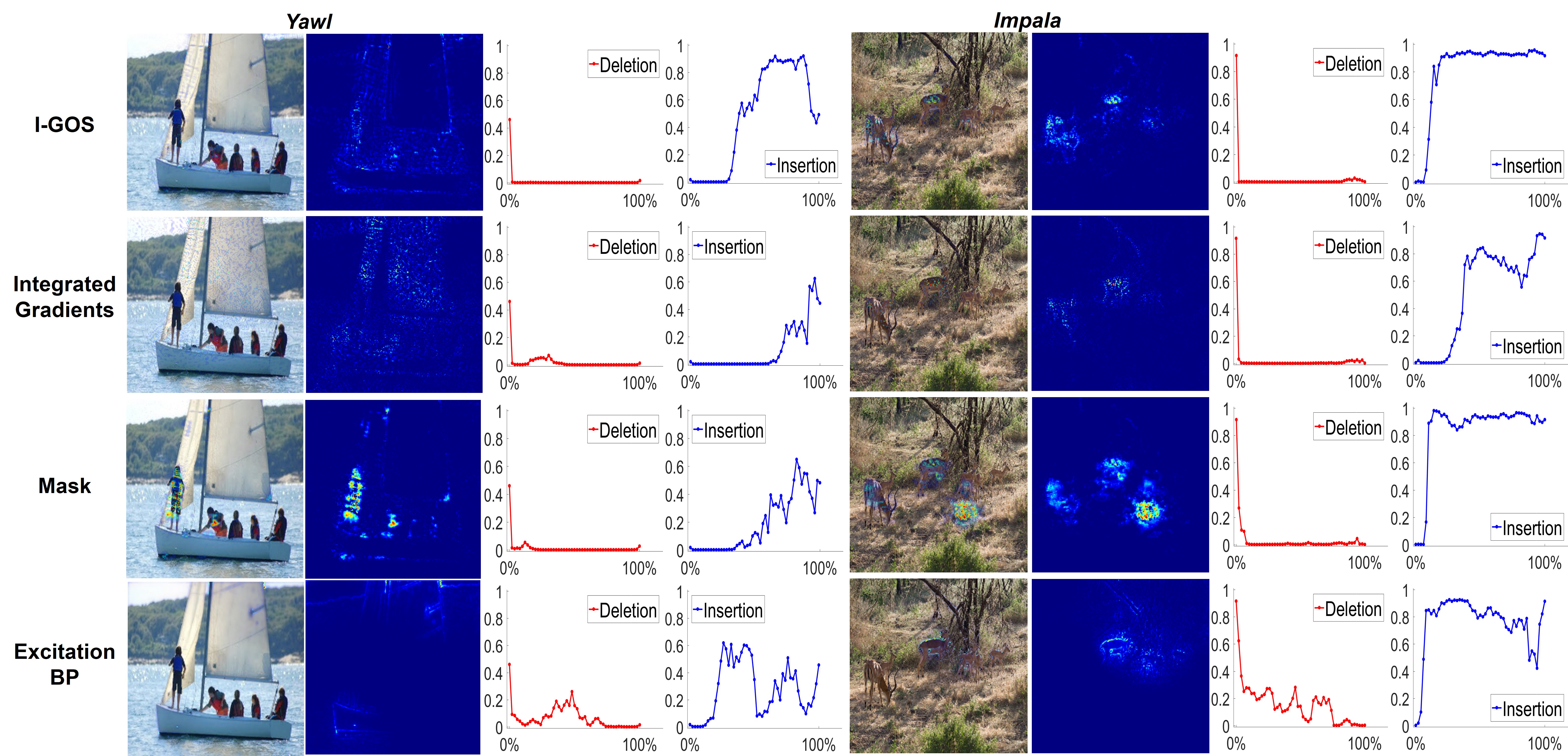}
\caption{A comparison among different approaches with heatmaps of $224\times 224$ resolution. The red plot illustrates how the CNN predicted probability drops with more areas masked, and the blue plot illustrates how the prediction increases with more areas revealed. 
The x axis for the red/blue plot represents the percentage of pixels masked/revealed;
the y axis for the red/blue plot represents the predicted class probability.
One can see with I-GOS the red curve drops earlier and the blue plot increases earlier, leading to more area under the insertion curve (insertion metric) and less area under the deletion curve (deletion metric). (Best viewed in color) } 
\label{fig:compare1b}
\vskip -0.15in
\end{figure*}

\begin{figure*}[t]
\centering
\includegraphics[width=1.9\columnwidth]{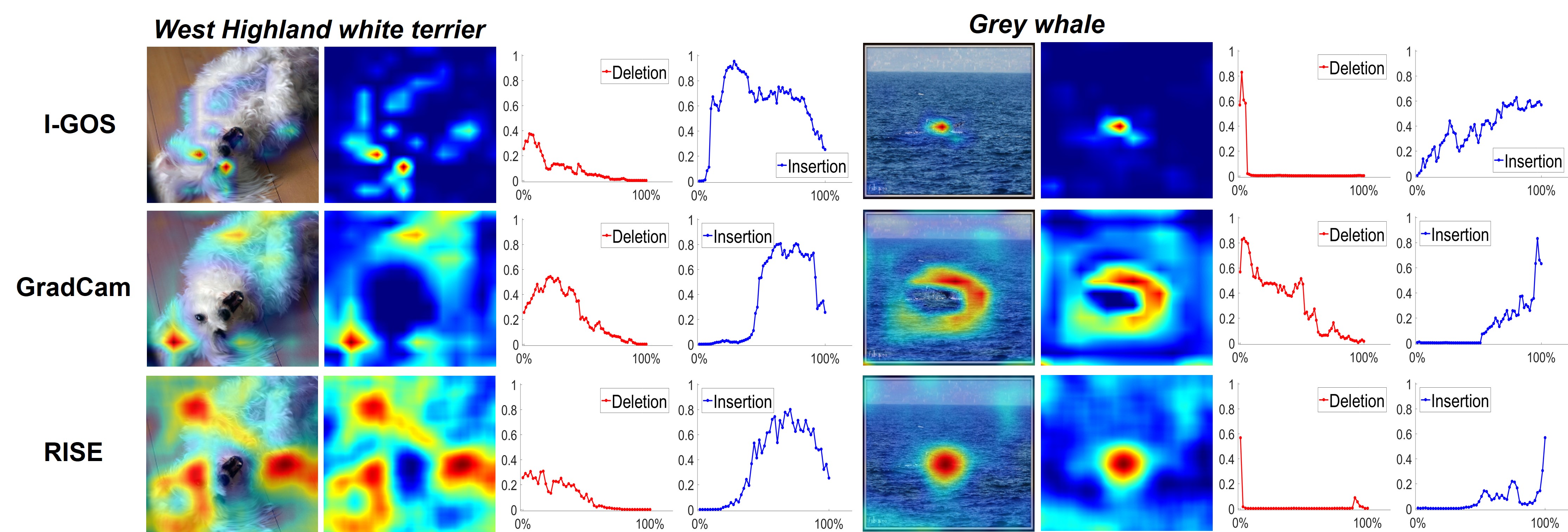}
\caption{Comparisons between GradCam, RISE, and I-GOS, see Fig. \ref{fig:compare1b} caption for explanations of the meaning of the curves.}
\label{fig:GradCamRISEb}
\vskip -0.15in
\end{figure*}


\begin{figure*}
   \centering
\includegraphics[width=1.7\columnwidth]{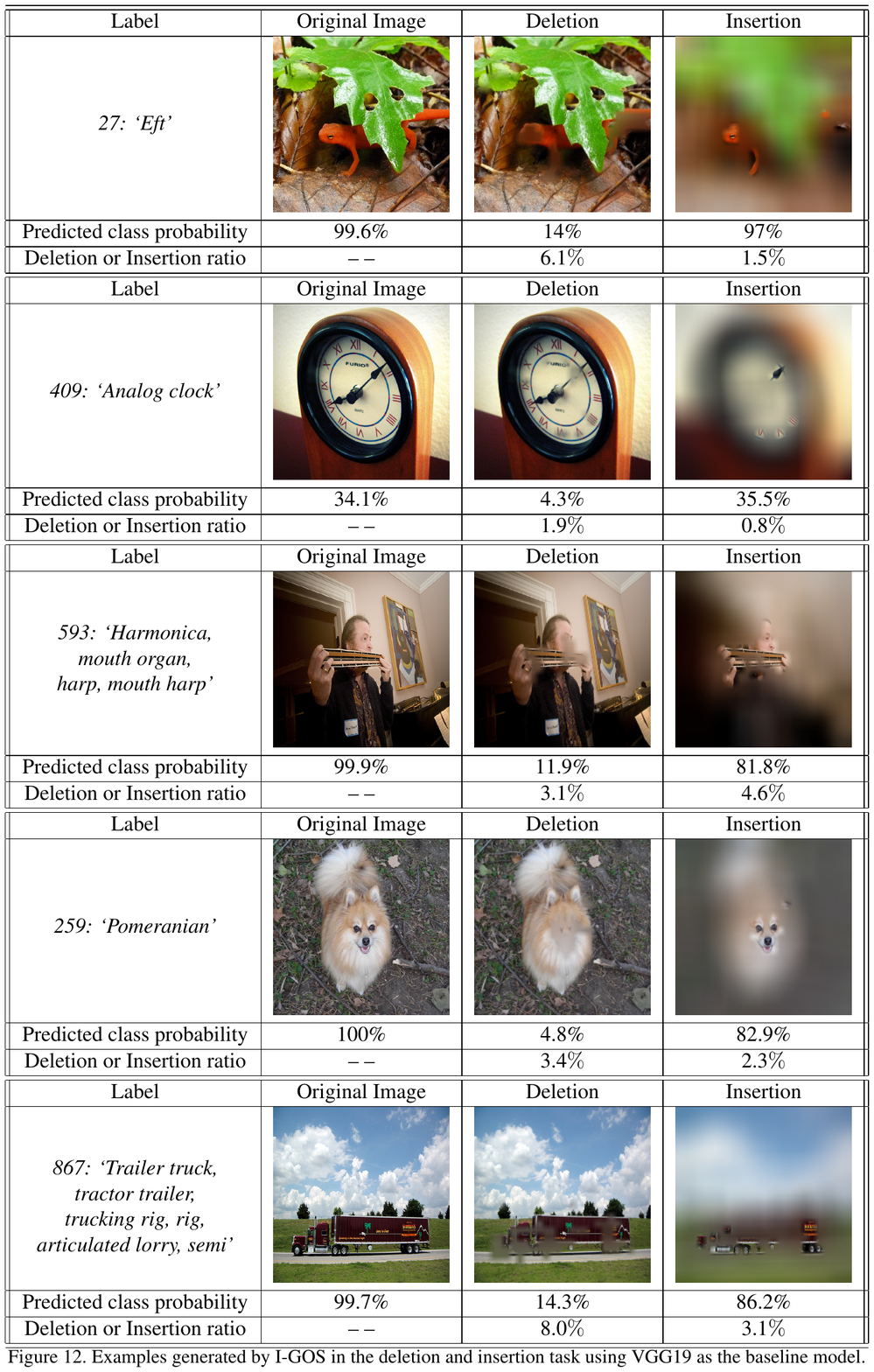}

    \caption{Examples generated by I-GOS in the deletion and insertion task using VGG19 as the baseline model.}
    \label{fig:vgg1} 
\end{figure*}

\begin{figure*}
   \centering
\includegraphics[width=1.7\columnwidth]{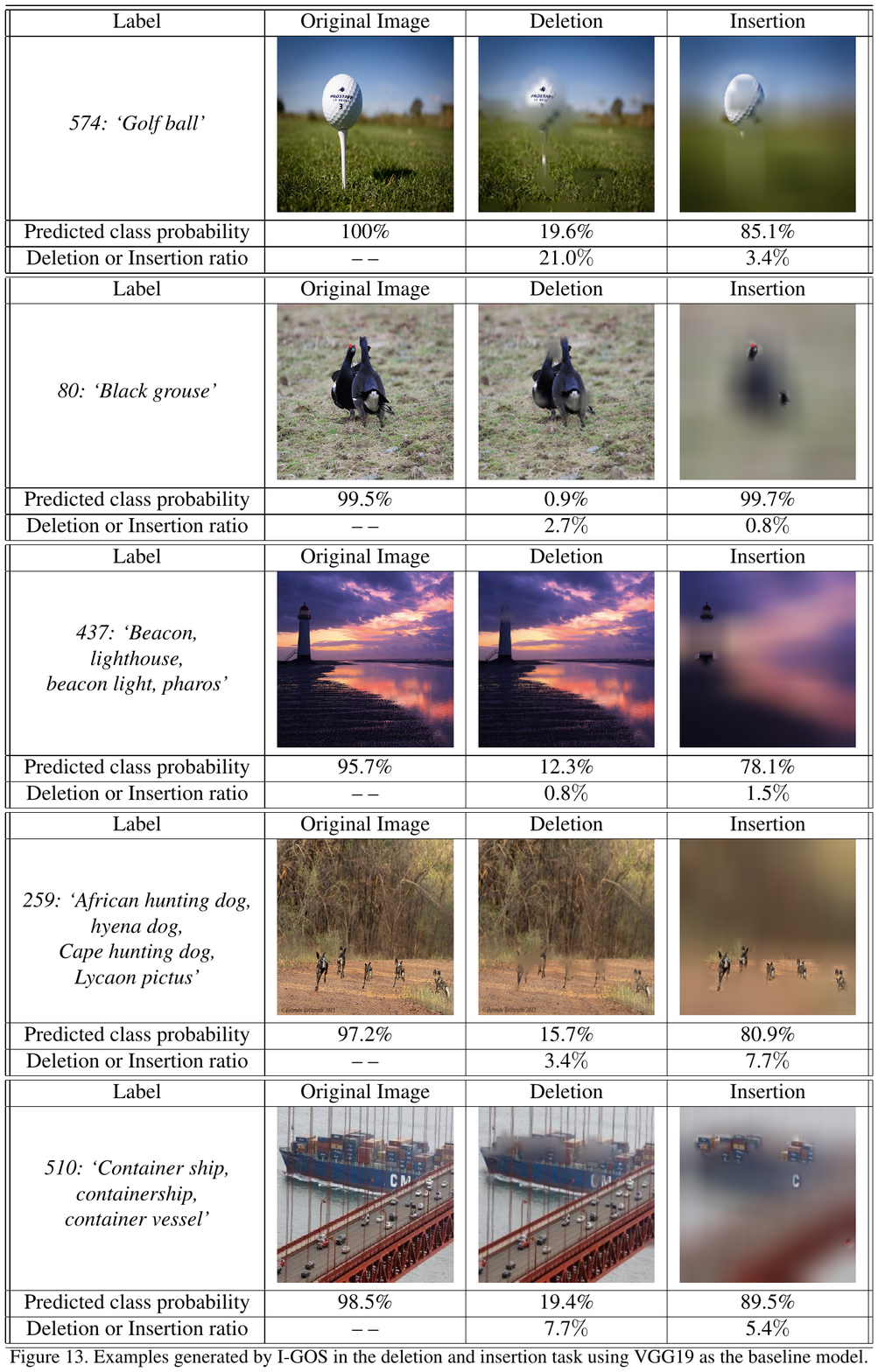}

    \caption{Examples generated by I-GOS in the deletion and insertion task using VGG19 as the baseline model.}
    \label{fig:vgg2} 
\end{figure*}

\begin{figure*}
   \centering
\includegraphics[width=1.7\columnwidth]{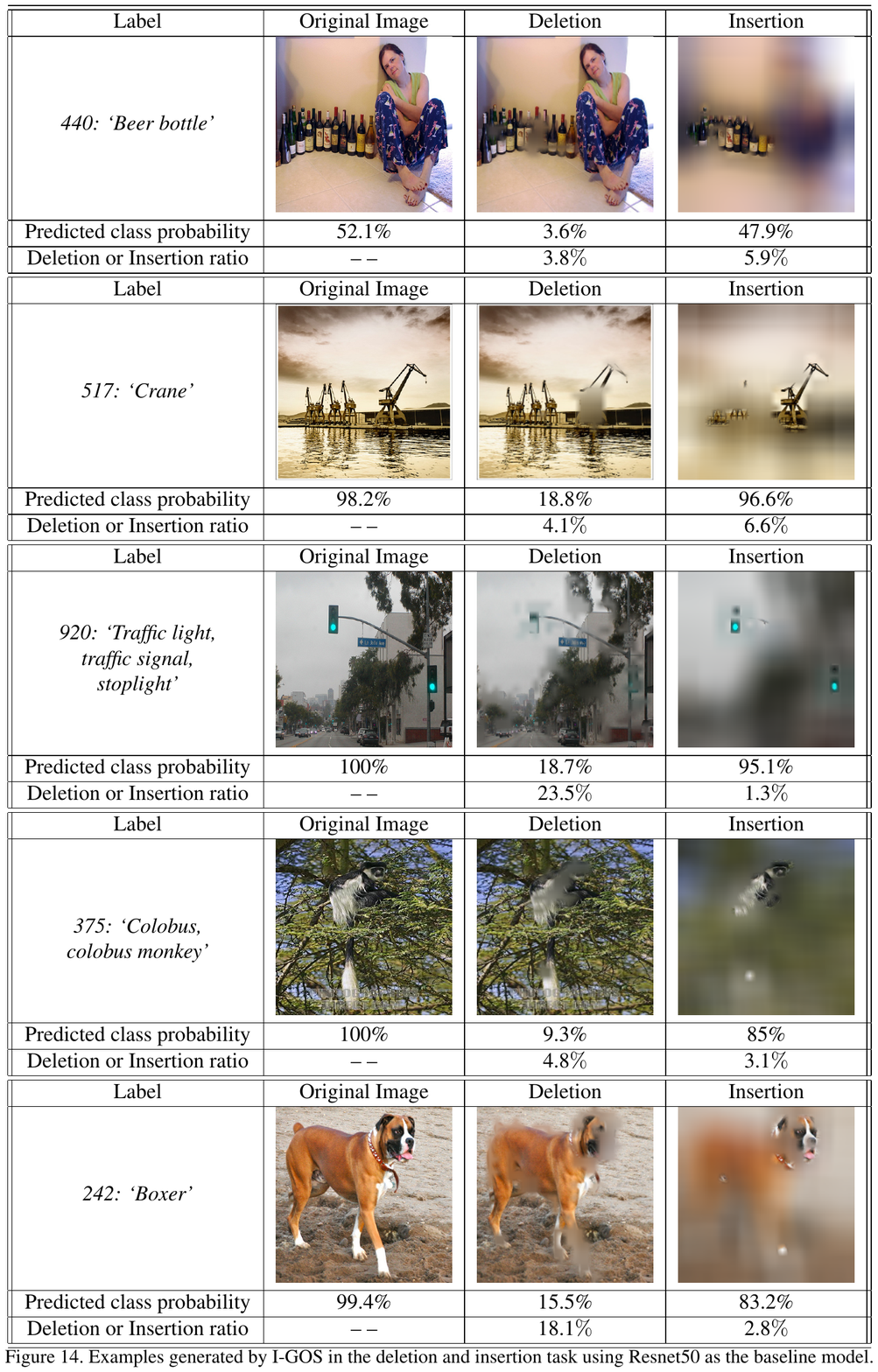}

    \caption{Examples generated by I-GOS in the deletion and insertion task using Resnet50 as the baseline model.}
    \label{fig:resnet1} 
\end{figure*}

\begin{figure*}
   \centering
\includegraphics[width=1.7\columnwidth]{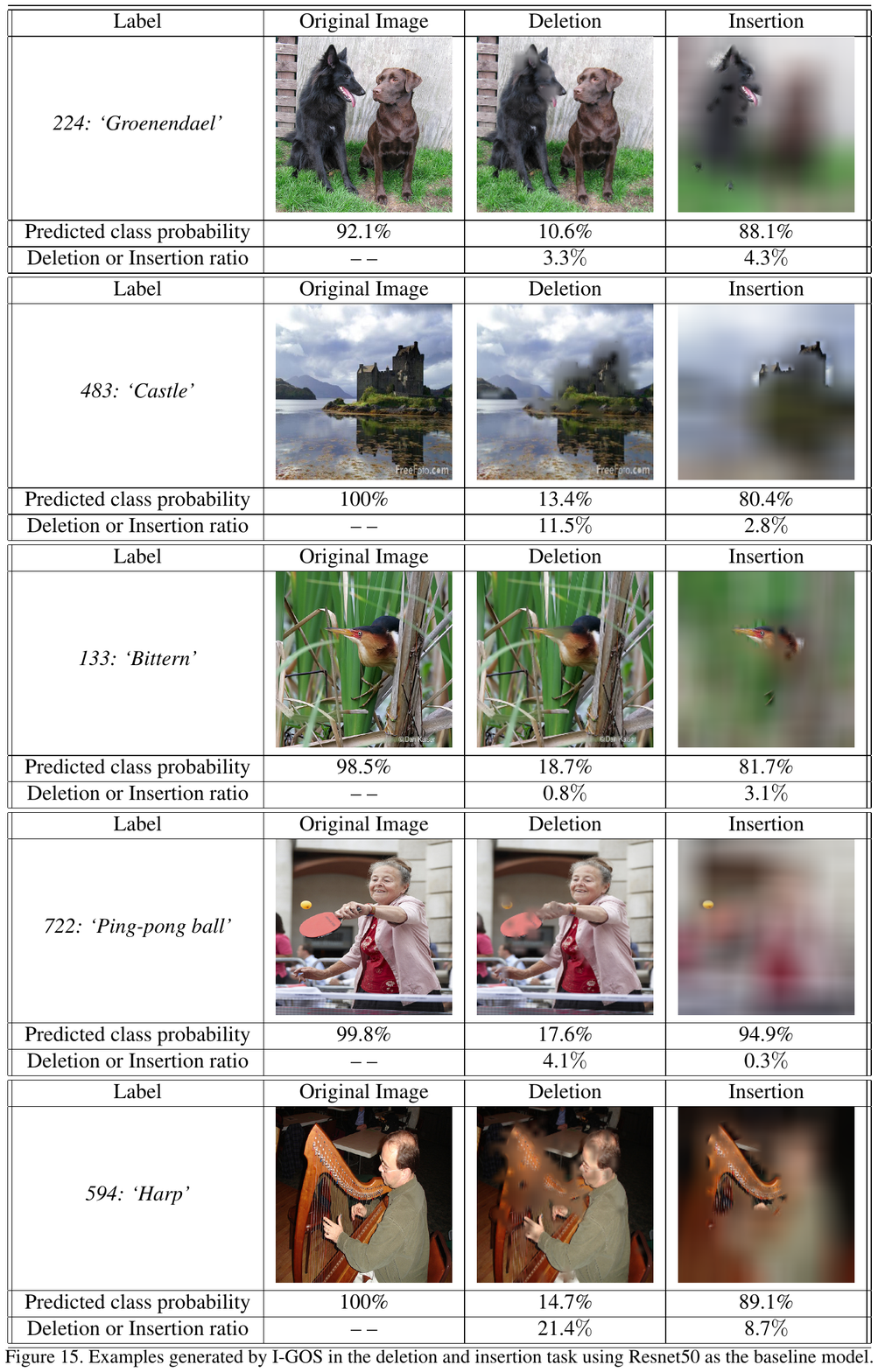}

    \caption{Examples generated by I-GOS in the deletion and insertion task using Resnet50 as the baseline model.}
    \label{fig:resnet2} 
\end{figure*}


\subsection*{{\uppercase\expandafter{\romannumeral5}. Deletion and Insertion Visualizations}}
Fig. \ref{fig:compare1b} shows more comparison examples between different approaches on $224 \times 224$ heatmaps.
Fig. \ref{fig:GradCamRISEb} shows more visual comparisons between our approach, GradCAM, and RISE.
From Fig. \ref{fig:compare1b} we can see that, for Mask, it focuses on person instead of {\em Yawl} on the left image, and focuses on grass instead of {\em Impala} on the right image, indicating that sometimes the optimization can be stuck in a bad local optimum.
From Fig. \ref{fig:GradCamRISEb} we observe that sometimes GradCAM also fires on image border, corner, or irrelevant parts of the image ({\em Grey whale} in Fig. \ref{fig:GradCamRISEb}), which results in bad deletion and insertion scores. 
And the randomness on the mask indeed limits the performance of RISE ({\em West Highland white terrier} in Fig. \ref{fig:GradCamRISEb}).


Fig. \ref{fig:vgg1}-\ref{fig:vgg2} show some examples generated by our approach I-GOS in the deletion and insertion task using VGG19 as the baseline model.
Fig. \ref{fig:resnet1}-\ref{fig:resnet2} show some examples generated by I-GOS in the deletion and insertion task using Resnet50 as the baseline model.
The deletion or insertion image is generated by $I_0 \odot {\text up}(M) + \tilde{I}_0 \odot \left({\bf 1}-{\text up}(M)\right)$, where the resolution of $M$ is $28\times 28$.
For deletion image, we initialize the mask $M$ as matrix of ones, then set the top $N$ pixels in the mask to $0$ based on the values of the heatmap, where the deletion ratio represents the proportion of pixels that are set to $0$.
For insertion image, we initialize mask $M$ as matrix of zeros, then set the top $N$ pixels in the mask to $1$ based on the values of the heatmap, where the insertion ratio represents the proportion of pixels that are set to $1$.
In Fig. \ref{fig:vgg1}-\ref{fig:resnet2}, the masked/revealed regions of the images may seem a little larger than the number of the deletion/insertion ratios. The reason is that after upsampling the mask $M$, some pixels on the border may have values between $0$ and $1$, resulting in larger regions to be masked or revealed.
The predicted class probability is the output value after softmax for the same category using the original image, the deletion image, and the insertion image as the input, respectively.
From Fig. \ref{fig:vgg1}-\ref{fig:resnet2} we observe that the proposed approach I-GOS can utilize a low deletion ratio to achieve a low predicted class probability for the deletion task, and a low insertion ratio to achieve a high predicted class probability for the insertion task at the same time, indicating that I-GOS truly discovers the key features of the images that the CNN network is using.
Especially, we realize that CNN is indeed fixating on very small regions in the image and very local features in many cases to make a prediction, e.g. in \textit{Pomeranian}, the face of the dog is utmostly important. Without the face the prediction is reduced to almost zero, and with only the face and a rough outline of the dog, the prediction is almost perfect.  The same can be said for \textit{Eft}, \textit{Black grouse}, \textit{lighthouse} and \textit{boxer}. Interestingly, for \textit{Container ship} and \textit{trailer truck}, their functional parts are extremely important to the classification. \textit{Trailer truck} almost cannot be classified without the wheels (and could be classified with only the wheels), and \textit{container ship} cannot be classified without the containers (and could be classified with almost only the containers and a rough outline of the ship).



{\small
\bibliographystyle{aaai}
\bibliography{egbib}
}

\end{document}